\documentclass{article}
\usepackage{ijcai26}

\usepackage{times}
\usepackage{soul}
\usepackage{url}
\usepackage[hidelinks]{hyperref}
\usepackage[utf8]{inputenc}
\usepackage[small]{caption}
\usepackage{graphicx}
\usepackage{amsmath}
\usepackage{amsthm}
\usepackage{amssymb}
\usepackage{enumitem}
\usepackage{booktabs}
\usepackage[switch]{lineno}
\usepackage{tikz}
\usetikzlibrary{positioning,chains,arrows,arrows.meta,shadows,fadings,shapes,backgrounds,matrix,patterns,plotmarks,trees,mindmap,automata,fit,calc,decorations.text}
\usepackage{pgfplots, pgfplotstable}
\usepackage{tabularx}
\usepackage{thmtools}
\usepackage{mathtools}

\usepackage[noend,linesnumbered]{algorithm2e} 
\usepackage{microtype}
\usepackage{multirow, makecell}

\newcommand{\ALCQIfeat}{\ensuremath{\mathcal{ALCQI}(f)}\xspace}
\newcommand{\ALCQfeat}{\ensuremath{\mathcal{ALCQ}(f)}\xspace}

\newcommand{\ALCfeat}{\ensuremath{\mathcal{ALC}(f)}\xspace}
\newcommand{\ALCIfeat}{\ensuremath{\mathcal{ALCI}(f)}\xspace}

\usepackage{misc}
\usepackage{xspace}

\newtheorem{example}{Example}
\newtheorem{theorem}{Theorem}
\newtheorem{lemma}{Lemma}

\newtheorem{definition}{Definition}

\title{Bounded Fitting for Expressive Description Logics}

\author{
Maurice Funk$^1$
\and
Jean Christoph Jung$^2$\and
Tom Voellmer$^{2,3}$
\affiliations
$^1$Leipzig University and ScaDS.AI Center Dresden/Leipzig\\
$^2$TU Dortmund University\\
$^3$Center for Trustworthy Data Science and Security, University Alliance Ruhr\\
\emails
mfunk@informatik.uni-leipzig.de,
\{jean.jung, tom.voellmer\}@tu-dortmund.de,
}

\begin{document}

\maketitle
\begin{abstract}
    Bounded fitting is an attractive paradigm for learning logical formulas from labeled data examples that offers PAC-style generalization guarantees and can often be implemented leveraging SAT solvers. 
    It has been successfully applied to learning concepts of the description logic $\ALC$. We study bounded fitting for learning concepts in expressive description logics that extend $\ALC$ with inverse roles, qualified number restrictions, and feature comparisons. We investigate under which conditions bounded fitting keeps its favorable theoretical properties in this setting, and implement it using a SAT solver. We compare our tool with state-of-the-art concept learners with encouraging results, demonstrating that it is a practical approach to expressive concept learning.
\end{abstract}

\section{Introduction}

Fitting logical formulas to labeled data examples is the basic task of computing, given some data instance $I$, positive examples $P$, and negative examples $N$, a formula $\varphi$ \emph{fitting} $P,N$ in $I$, that is, $\varphi(a)$ holds in $I$ for all $a\in P$ but for no $a\in N$. It underlies many higher level tasks such as example-driven query specification, data exploration, explaining the labeling of the examples, and knowledge acquisition~\cite{DBLP:journals/is/Martins19,DBLP:journals/sigmod/CateFJL23,DBLP:conf/afips/Zloof75}.
A recently prominent approach to finding fitting formulas is \emph{bounded fitting}, which rests on the idea to test whether there are fitting formulas of increasing size~\cite{DBLP:conf/ijcai/CateFJL23,ISWC25,DBLP:conf/fmcad/NeiderG18,DBLP:conf/ijcar/PommelletSS24}. Algorithm~\ref{alg:boundedfitting} illustrates the idea for a general logic \Lmc. Bounded fitting is an attractive paradigm both from a theoretical and a practical perspective. Indeed, it is \emph{complete}, meaning that it will find a fitting formula whenever there is one, and always returns a fitting formula of \emph{minimal size}. The latter is desired by users since they typically prefer small fittings, and implies that bounded fitting is an \emph{Occam algorithm}~\cite{Blumer87}, which in turn guarantees that the found concept generalizes well to unseen examples in the sense of Valiant's probably approximately correct (PAC) learning~\cite{DBLP:journals/cacm/Valiant84,DBLP:journals/jacm/BlumerEHW89}. Finally, bounded fitting can often be implemented with the help of a SAT solver, allowing bounded fitting algorithms to perform well in practice.

In this paper, we are interested in description logic (DL) concepts, an important class of logical formulas used for conceptual modeling, querying knowledge bases, and as a building block of ontologies~\cite{OWL2DLweb}. The importance of DLs in data management has lead to the development of several tools for \emph{DL concept learning}, that is, the task of finding a fitting DL concept given some knowledge base~\cite{DBLP:conf/ijcai/FunkJLPW19,DBLP:journals/ml/LehmannH10}. Existing tools are based on different techniques such as
refinement operators as used in DL-Learner~\cite{DBLP:conf/www/BuhmannLWB18}, evolutionary algorithms as used in EvoLearner~\cite{DBLP:conf/www/HeindorfBDWGDN22}, or decision trees as in TDL~\cite{DBLP:conf/pkdd/DemirYRMN25}.
Recently, bounded fitting was demonstrated to be a suitable alternative for learning concepts formulated in the standard DLs \EL and \ALC~\cite{DBLP:conf/ijcai/CateFJL23,ISWC25}.

\begin{algorithm}[t]
\caption{Bounded Fitting for logic \Lmc.}\label{alg:boundedfitting}
   \KwIn{Data instance $I$, examples $P,N$}
   
   \For {$k:=1,2,\ldots$}{
        \If{there is $\varphi\in \Lmc$ of size $k$ that fits $P,N$ in $I$}{\Return $\varphi$}
   }
\end{algorithm}
However, these DLs do not allow to express essential properties present in  real-world data such as
inverse roles, counting, and numeric features. Hence, in this paper we consider the DL $\ALCQIfeat$, which is the target language in EvoLearner and DL-Learner and which extends $\ALC$ with qualified number restrictions (abbreviated with $\Qmc$), inverse roles ($\Imc$), and feature comparisons ($f$), a simple mean to access data properties. One typical \ALCQIfeat concept is
\[\text{Elephant}\sqcap \text{Male} \sqcap ((\text{weight}\geq 3t)\sqcup (\geq 2\text{ childOf}^-.\text{Female}))\]
expressing the concept of male elephants that weigh at least three tons or have at least two daughters. 
The additional expressiveness over $\ALC$ allows to learn more interesting and meaningful concepts.

We make three main contributions. 
First, we study the fitting problem for $\ALCQIfeat$ and show that one can compute in polynomial time a concept that fits a maximal number of the given examples. The algorithms follow standard ideas based on computing maximal bisimulations, but the fact that one can efficiently compute optimal fitting concepts was surprising to us. While such optimal fitting algorithms are good news from a pure fitting perspective, under standard complexity assumptions they cannot be  Occam algorithms or have PAC-style generalization guarantees.

We thus investigate as our second contribution how to extend the bounded fitting paradigm to $\ALCQIfeat$. 
To this end, we show first that the \emph{size-restricted fitting problem} that has to be solved in Line~2 of Algorithm~\ref{alg:boundedfitting} is \NPclass-complete for \ALCQIfeat, as it is for $\ALC$ and many of its fragments~\cite{ISWC25}. This paves the way to the use of SAT solvers. We then observe that inverse roles can be reduced away in a simple fashion. In contrast, adapting bounded fitting to number restrictions and feature comparisons while having a realistic implementation based on SAT solvers in mind is more challenging. We resort to allow for number restrictions and feature comparisons in a controlled and incremental way. While this is in spirit of \emph{Occam's razor}, which demands to prefer simpler concepts, and can be implemented using a SAT solver, it is not immediately clear that it results in Occam algorithms as formalized by Blumer \textit{et al.}~(\citeyear{DBLP:journals/jacm/BlumerEHW89}). Our main results are then sufficient conditions for when the constructed algorithms are indeed Occam, (and thus have PAC-style guarantees.

As our third contribution, we implement bounded fitting for $\ALCQIfeat$, using a SAT solver to solve the size-restricted fitting problem.
In order to achieve competitive performance, we describe two techniques that speed up SAT-based bounded fitting. First, we propose that bisimilarity can be exploited as a preprocessing step to reduce the size of the input database. This is not specific to bounded fitting and can potentially be used to improve other concept learning systems as well. Second, we observe that bounded fitting allows easy parallelization and implement it to use multiple CPU cores.

We evaluate our implementation on several benchmarks and compare it to the state-of-the-art concept learners EvoLearner and TDL. The experiments show that the concepts learned by our implementation generalize at least as good as the ones found by EvoLearner and TDL on the standard structured machine learning (SML) benchmarks. Moreover, using a newly generated benchmark, we show that our tool is better in finding fitting $\ALCQ$ concepts than other systems. Finally, we confirm that the presented optimization are effective in practice as well. Overall, our evaluation demonstrates that bounded fitting works well in practice also for expressive DLs.

Missing proofs can be found in the appendix.

\paragraph{Related Work}
The works most relevant related to this paper are~\cite{DBLP:conf/ijcai/CateFJL23} and~\cite{ISWC25}, which introduce the bounded fitting framework for the DLs $\EL$ and $\ALC$, respectively. Beyond the concept learners mentioned above, there is a series of other tools that rely on more classical machine learning techniques such as trained search heuristics and neural concept synthesis like ROCES~\cite{DBLP:conf/ijcai/KouagouHDN24}, NCES/NCES2~\cite{DBLP:conf/pkdd/KouagouHDN23}, and Drill~ \cite{DBLP:conf/ijcai/DemirN23}. As they require a separate pretraining step, it is difficult to make a meaningful comparison with them. Other concept learners are DL-Foil~\cite{DBLP:journals/fgcs/RizzoFd20} and SPaCel~\cite{DBLP:journals/jmlr/TranDGM17}.

\section{Background}\label{sec:prelims}

We introduce syntax and semantics of the description
logic~\ALCQIfeat, the extension of \ALCQI~\cite{DL-Textbook} with simple feature comparisons. Let \NC, \NR, \NF, and \NI be mutually disjoint and countably infinite sets
of \emph{concept names}, \emph{role names}, \emph{feature names}, and \emph{individual names}, respectively. We assume that each feature name $f$ has a \emph{domain} $\dom(f)$, the set of values the feature $f$ can take. We further assume a total order $\leq$ in every domain.
An \emph{\ALCQIfeat concept} $C$ is defined according to the
syntax rule
\[
C, D ::= \top \mid A \mid \neg C \mid C \sqcap D \mid (\bowtie n\ R.C) \mid (f\bowtie v)
\]
where $A$ ranges over concept names, ${\bowtie}\in\{\leq,\geq\}$, $R$ is a role name $r$ or an \emph{inverse role} $R=r^-$, $n \geq 0$, $f\in\NF$ is a feature name, and $v\in\dom(f)$. We call concepts of the shape $(\bowtie n\ R.C)$ \emph{qualified number restrictions} and concepts of shape $(f\bowtie v)$ \emph{feature comparisons}.
We use the standard abbreviations $\bot$, $C\sqcup D$, $\exists R.C$, and $\forall R.C$ for $\neg \top$, $\neg (\neg C\sqcap \neg D)$, $(\geq 1\ R.C)$, and $(\leq 0\ R.\neg C)$, respectively. We also consider the usual restrictions of $\ALCQIfeat$ that are obtained by disallowing certain constructors in the syntax. For example, $\ALCQ$ is obtained by disallowing inverse roles $r^-$ and feature comparison, while $\ALCIfeat$ is obtained by disallowing qualified number restrictions.

A \emph{database} $I$ is a finite set of facts of the form $A(a)$, $r(a,b)$, and $f(a,v)$ for $A\in\NC$, $r\in\NR$, $f\in\NF$, $a,b\in\NI$, and $v\in\dom(f)$. We let $\adom(I)$ denote the set of all individual names that occur in $I$. Moreover, we assume that for every $a\in\adom(I)$ and $f\in\NF$, $I$ contains at most one fact of shape $f(a,v)$. We write $r^-(b,a)\in I$ if $r(a,b)\in I$, and define $\mn{succ}_I(a,R)$ as the set \[\mn{succ}_I^R(a)=\{b\mid R(a,b)\in I\}.\]

The semantics $C^I$ of an \ALCQIfeat concept $C$ in database $I$ is defined inductively as follows. In the inductive base, we set $\top^I=\adom(I)$ and $A^I=\{a\mid A(a)\in I\}$. In the inductive step, we define $(\neg C)^I=\adom(I)\setminus C^I$, $(C\sqcap D)^I = C^I\cap D^I$, and
\begin{align*}
    (\bowtie n\ R.C)^I & = \{a\in\adom(I)\mid |\mn{succ}^R_I(a)\cap C^I|\bowtie n\}, \\
    (f\bowtie v)^I & = \{a\in \adom(I)\mid f(a,v')\in I\text{ and }v'\bowtie v\}.
\end{align*}

A \emph{signature} is a finite set $\Sigma$ of concept, role, and feature names. A $\Sigma$-database is a database in which all facts mention only concept, role, and feature names from $\Sigma$. For a syntactic object $O$ such as a concept or a database, we let $\lVert O \rVert$ denote the \emph{size} of $O$, that is, the length of a string representation of $O$ in a suitable fixed alphabet in which concept/role/feature names contribute $1$. We assume unary coding of numbers in qualified number restrictions. 

\paragraph{Fitting Problems}
Let $I$ be a database and $P,N\subseteq\adom(I)$ be sets of individuals called \emph{positive} and \emph{negative examples}.
We say that an \ALCQIfeat concept $C$ \emph{fits $P,N$ in $I$} if $a\in C^{I}$ for each $a\in P$ and $b\notin C^{I}$ for each $b\in N$. The \emph{fitting problem} for \ALCQIfeat is to decide, given $I,P,N$, whether there is an \ALCQIfeat concept that fits $P,N$ in $I$. A \emph{fitting algorithm for \ALCQIfeat} takes as input $I, P, N$ as above, and returns an \ALCQIfeat concept that fits $P, N$ in $I$, if it exists. The fitting problem and fitting algorithms for fragments of \ALCQIfeat are defined accordingly. Observe that we make the \emph{closed world assumption} in contrast to other works which study the fitting problem for knowledge bases, see, e.g.~\cite{DBLP:conf/ijcai/FunkJLPW19,DBLP:journals/ai/JungLPW22}).

\begin{example} Consider the following database $I$:

     \begin{tikzpicture}[on grid]
            \node[] (a) at (0, 0) {$a$};
            \node[below left = 0.9cm and 1.1cm of a] (a0) {$a_1$};
            \node[below = 0.8cm of a0] (w0) {$121cm$};
            \node[below right = 0.9cm and 1.1cm of a] (a1) {$a_2$};
            \node[below = 0.8cm of a1] (w1) {$145cm$};

            \node[ ] (b) at (4, 0) {$b$};
            \node[below left = 0.9cm and 1.1cm of b] (b0)  {$b_1$};
            \node[below = 0.8cm of b0] (w2) {$152cm$};

            \node[below right = 0.9cm and 1.1cm of b] (b1)  {$b_2$};
            \node[below = 0.8cm of b1] (w3) {$163cm$};

            \draw[-Latex] (a) edge node[above, sloped] {\footnotesize child} (a1);
            \draw[-Latex] (a) edge node[above, sloped] {\footnotesize child} (a0);
            \draw[-Latex] (b) edge node[above, sloped] {\footnotesize child} (b0);
            \draw[-Latex] (b) edge node[above, sloped] {\footnotesize child} (b1);
            \draw[-Latex] (a0) edge node[right] {\footnotesize height} (w0);
            \draw[-Latex] (a1) edge node[right] {\footnotesize height} (w1);
            \draw[-Latex] (b0) edge node[right] {\footnotesize height} (w2);

            \draw[-Latex] (b1) edge node[right] {\footnotesize height} (w3);
    \end{tikzpicture}
    The $\ALCQIfeat$ concept $(\leq 1\ \mathit{child}.(\mathit{height} \geq 140cm))$ fits $\{a\}, \{b\}$ in $I$. Further, there is no $\ALCQI$ concept that fits $\{a\}, \{b\}$.
\end{example}

\paragraph{Occam Algorithms} We recall the standard notion of  \emph{Occam algorithms} following~\cite{DBLP:journals/jacm/BlumerEHW89}.
Let $\Cmc$ be a set of DL concepts, $I$ be a database and $S\subseteq \adom(I)$ be a set of examples. We say that $\Cmc$ \emph{shatters $S$ over $I$} if for every $S' \subseteq S$ there is a $C \in \Cmc$ such that $S' = S \cap C^I$.
The \emph{VC-Dimension} of a set of concepts $\Cmc$ is the largest number $d \geq 0$ such that there is a database $I$ and a set of examples $S$ with cardinality $d$ that is shattered by $\Cmc$.

Let $\mathbf{A}$ be a fitting algorithm. For a signature $\Sigma$ and $s, m \geq 1$, let $\mathcal H^{\mathbf A}(\Sigma,s,m)$ be the set of all concepts returned by $\mathbf A$ when started on some input $I, P, N$ such that $I$ is a $\Sigma$-database, $|P| + |N| = m$, and there is a concept $C$ with $\lVert C \rVert \leq s$ that fits $P, N$ in $I$. We call $\mathbf{A}$ an \emph{Occam algorithm} if there is a polynomial $p$ and $\alpha\in[0,1)$ 
such that for all $\Sigma$ and $s,m \ge 1$ the VC-Dimension of $\mathcal H^{\mathbf A}(\Sigma,s,m)$ is at most $p(s,|\Sigma|)\cdot m^\alpha$.

It is well known that that every Occam algorithm has PAC-style generalization meaning that it is a \emph{sample-efficient PAC learning algorithm}. As the precise definition of that notion is not important for the main part of the paper, we refer the reader to~\cite{DBLP:journals/jacm/BlumerEHW89} or to the supplementary material. 

\section{The Fitting Problem for \ALCQIfeat}\label{sec:fitting}

We begin our investigation by showing that the fitting problem for \ALCQIfeat and its fragments is efficiently solvable.

\begin{theorem}\label{thm:fitting}
    The fitting problems for \ALCQIfeat, \ALCQfeat, \ALCQI, and \ALCQ are decidable in polynomial time. 
\end{theorem}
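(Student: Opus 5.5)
We reduce the fitting problem to a bisimulation (indistinguishability) check. A fitting \ALCQIfeat concept exists iff no pair $(a,b)\in P\times N$ has $a$ and $b$ satisfying exactly the same \ALCQIfeat concepts in $I$. This holds because concepts are closed under Boolean combinations. If for every such pair there is a concept $C_{a,b}$ with $a\in C_{a,b}^I$ and $b\notin C_{a,b}^I$, then $\bigsqcup_{a\in P}\bigsqcap_{b\in N} C_{a,b}$ fits $P,N$ in $I$. Conversely, if some $a\in P$ and $b\in N$ are indistinguishable, no concept fits. So it suffices to decide, in polynomial time, the equivalence relation $\equiv$ on $\adom(I)$ that identifies two individuals when they satisfy the same concepts.

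The plan is to characterize $\equiv$ as the greatest \emph{counting bisimulation} appropriate for the logic, and to compute it by partition refinement. For \ALCQIfeat, a relation $Z$ qualifies if every $(a,b)\in Z$ satisfies three conditions. First, $a$ and $b$ agree on all concept names and on all feature values; for each $f$, either both lack an $f$-fact or both have $f(a,v),f(b,v)$ with the same $v$, since comparisons $f\le v$ and $f\ge v$ together pin the value down exactly. Second, when $Z$ is an equivalence relation, for each role $R$ (including inverses) and each $Z$-class $E$, we require $|\mn{succ}^R_I(a)\cap E| = |\mn{succ}^R_I(b)\cap E|$. For the fragments without $\Imc$ or $f$ we simply drop the corresponding conditions.

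We then run the standard refinement. Start with the partition by atomic type, and repeatedly split classes according to the vectors of successor counts into the current classes. This stabilizes after at most $|\adom(I)|$ rounds, and each round is polynomial, so the whole computation is polynomial.

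Two directions must be proved.

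\emph{Soundness.} Individuals in the same final class satisfy the same concepts. This goes by induction on the concept. The case $(\bowtie n\ R.C)$ uses that $C^I$ is a union of classes, so the counts of $R$-successors in $C^I$ agree.

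\emph{Completeness.} Individuals that are separated at round $i$ are distinguished by some concept. We show by induction on $i$ that every class $E$ of round $i$ is definable by a concept $C_E$, i.e.\ $C_E^I=E$. Round $0$ classes are defined by conjunctions of literals and feature comparisons. For a split at round $i+1$, the two individuals differ in the number of $R$-successors in some class $E$, say $k$ versus $k'$ with $k<k'$. Then $(\le k\ R.C_E)$ separates them, with $k\le|\adom(I)|$, so the unary encoding stays polynomial. Each class of the new partition is then a finite Boolean combination of such separators together with earlier class concepts.

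The main obstacle I expect is this completeness step: guaranteeing that distinct counts yield a definable distinction, and doing it uniformly for inverse roles. Inverse roles fit in by treating $r^-$ as an extra role with the reversed successor relation. For the decision problem alone we only need the implication ``separated implies distinguishable'', so the size of the constructed concepts is irrelevant. Still, building $C_E$ with polynomially bounded numbers suffices for correctness.
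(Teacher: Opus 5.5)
Your proof is correct, and its core is the paper's: refinement by successor-count profiles, class-defining concepts built with $(\le k\ R.C_E)$, and the combination $\bigsqcup_{a\in P}\bigsqcap_{b\in N}C_{ab}$. The decomposition differs, though.

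\emph{The paper's route.} It proves the statement only for \ALCQ. It cites de Rijke for the fact that \ALCQ bisimilar individuals cannot be separated, and it shows that the refinement computes the maximal \ALCQ bisimulation. The other three logics then follow from the two reductions of Lemma~\ref{lem:reduction}: fresh role names $\overline r$ replace inverse roles, and fresh concept names $A_{f\geq v}$ replace feature thresholds.

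\emph{Your route.} You put inverse roles and feature values directly into the initial partition and the refinement. You also prove both directions yourself: concept invariance on the stable partition by induction on concepts, and definability of every round's classes. So you need neither the general (non-equivalence) notion of bisimulation nor the external characterization.

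\emph{What each buys.} Your argument is more self-contained. It also shows that for features the refinement only has to test equality of values. The order is needed only to make the round-$0$ classes definable, e.g.\ via $(f\le v)\sqcap(f\ge v)$ and $\neg(f\le v)\sqcap\neg(f\ge v)$. The paper's argument is more modular, and the same reductions are reused later: for bounded fitting (the algorithms $\mathbf A_\Imc$ and $\mathbf A_f$) and for Theorem~\ref{thm:optimal}.

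One wording point: the decision procedure needs both of your directions, not only ``separated implies distinguishable''. Soundness is what shows that no concept fits when some $a\in P$ and $b\in N$ share a final class. You do prove it, so this is only phrasing.
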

This can be shown using standard arguments. Since some of these will be important later, we sketch the main steps here; for details see the supplementary material. We first show the theorem for \ALCQ and then for the remaining languages by a reduction to \ALCQ. We rely on the notion of bisimulations.

\begin{definition}
    Let $I$ be a database. An \emph{\ALCQ bisimulation on $I$} is a relation $S \subseteq \adom(I) \times \adom(I)$ such that the following conditions are satisfied for all $(a, b) \in S$:

    \begin{enumerate}
        \item for all $A \in \NC$, $A(a) \in I$ if and only if $A(b) \in I$;
        
        \item for all $r \in \NR$ and $D \subseteq \mn{succ}_I^r(a)$,
        there is $E \subseteq \mn{succ}_I^r(b)$ such that $S$ contains a bijection between
        $D$ and $E$;
        \item for all $r \in \NR$ and $E \subseteq \mn{succ}^r_I(b)$,
        there is $D \subseteq \mn{succ}^r_I(a)$ such that $S$ contains a bijection between
        $D$ and $E$.
    \end{enumerate}

    We call $a,b\in \adom(I)$ \emph{\ALCQ bisimilar}, written $I,a \sim I,b$, if there is an $\ALCQ$ bisimulation $S$ on $I$
    with $(a, b) \in S$.
\end{definition}

It is well-known that the existence of fitting $\ALCQ$ concepts is characterized by the non-existence of \ALCQ bisimulations between positive and negative examples. More precisely, there is an \ALCQ concept fitting $P,N$ in $I$ iff $I,a\not\sim I,b$ for all $a\in P,b\in N$~\cite{DBLP:journals/sLogica/Rijke00}. Since bisimilarity can be decided in polynomial time, Theorem~1 for \ALCQ follows.

It remains to provide the promised reductions. 

\begin{restatable}{lemma}{lemreduction}\label{lem:reduction}
  There are polynomial-time reductions from the fitting problem for \ALCQIfeat to that for \ALCQfeat, and from the fitting problem for \ALCQfeat to that of~\ALCQ.
\end{restatable}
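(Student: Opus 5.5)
We give two database transformations, each computable in polynomial time, that preserve the examples and the existence of fitting concepts.

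\emph{From \ALCQIfeat to \ALCQfeat.} Given $I,P,N$, let $I'$ extend $I$ by a fact $\hat r(b,a)$ for every $r(a,b)\in I$, where $\hat r$ is a fresh role name for each role name $r$ occurring in $I$. Keep $P,N$ unchanged. Then $\mn{succ}^{r^-}_I(a)=\mn{succ}^{\hat r}_{I'}(a)$ for all $a$. Let $C'$ be obtained from an \ALCQIfeat concept $C$ by replacing each $r^-$ with $\hat r$. A straightforward induction on $C$ shows $C^I=C'^{I'}$. For the converse direction, take an \ALCQfeat concept $D$ over $I'$ and replace each $\hat r$ by $r^-$; the same induction applies. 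Hence fitting concepts correspond in both directions, and $I'$ has at most twice as many facts as $I$.

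\emph{From \ALCQfeat to \ALCQ.} Feature comparisons must be replaced by concept names. For each feature $f$, let $V_f$ be the finitely many values $v$ with $f(a,v)\in I$. The key observation is that, on $I$, every comparison $(f\bowtie v)$ with $v\in\dom(f)$ has the same extension as some comparison $(f\bowtie w)$ with $w\in V_f$, or is empty. Indeed, $(f\geq v)^I$ equals $(f\geq w)^I$ where $w$ is the least element of $V_f$ with $w\geq v$; if no such $w$ exists, it equals $\bot^I$. The case $\leq$ is symmetric. For each $f$ and each $w\in V_f$, introduce fresh concept names $A_{f\geq w}$ and $A_{f\leq w}$. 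Let $I''$ contain the role and concept facts of $I$, plus $A_{f\geq w}(a)$ (resp.\ $A_{f\leq w}(a)$) whenever $f(a,v')\in I$ with $v'\geq w$ (resp.\ $v'\leq w$). This adds at most $O(|I|^2)$ facts. By induction on concepts, replacing comparisons with their normalized equivalents and then with the corresponding concept names yields an \ALCQ concept with the same extension in $I''$. Conversely, every \ALCQ concept over $I''$ translates back by substituting $(f\bowtie w)$ for $A_{f\bowtie w}$. Concept names not occurring in $I''$ have empty extension and can be replaced by $\bot$.

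The only delicate step is the normalization of comparison values, in particular handling values outside $V_f$ and comparisons that are empty or hold at all individuals carrying $f$. Once this is settled, all remaining steps are routine inductions on concept structure. Combining both reductions with polynomial-time \ALCQ bisimilarity gives the claim of Theorem~\ref{thm:fitting}.
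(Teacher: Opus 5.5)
Your proposal is correct and follows the paper's proof: the same fresh-role construction $\hat r(b,a)$ handles inverses, and feature comparisons are replaced by concept names indexed by the values $w$ occurring in $I$, with routine inductions in both directions. The only difference is cosmetic. You introduce both $A_{f\geq w}$ and $A_{f\leq w}$ and normalize values to $V_f$ explicitly. The paper instead keeps only $\geq$-names, assumes without loss of generality that all comparisons have shape $(f\geq v)$, and translates $(f\geq v)$ into the disjunction of all $A_{f\geq w}$ with $w\geq v$. Your variant avoids that WLOG step, which needs a small extra argument because $\neg(f\geq w)$ also holds at individuals with no $f$-value, so $(f\leq v)$ is not simply the negation of a $\geq$-comparison.
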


\begin{proof}
For the first reduction, let $I,P,N$ be an input to the fitting problem for \ALCQIfeat. Introduce a fresh role name $\overline r$ for every role name $r$ that occurs in $I$. Then then there is an \ALCQIfeat concept fitting $P,N$ in $I$ iff there is an \ALCQfeat concept fitting $P,N$ in $J=I\cup \{\overline r(b,a)\mid r(a,b)\in I\}$. 

For the second reduction, let $I,P,N$ be an input to the fitting problem for \ALCQfeat. 
Introduce a fresh concept name $A_{f \geq v}$ for every feature $f$ and value $v$ such that there is a fact $f(a,v)\in I$. Let $J$ be the database consisting of all facts of shape $A(a)$ and $r(a,b)$ from $I$ and 
all facts $A_{f \geq v}(a)$ such that there is a fact $f(a, v') \in I$ with $v \geq v'$. It is routine to verify that 
there is an \ALCQfeat concept fitting $P,N$ in $I$ if and only if there is an \ALCQ concept fitting $P,N$ in~$J$. 
\end{proof}

We show next that the algorithms used to prove Theorem~\ref{thm:fitting} can be made constructive in a strong sense. Note that we allow DAG representation of concepts when constructing them. 
\begin{restatable}{theorem}{thmoptimal}\label{thm:optimal}
    There are polynomial time fitting algorithms for \ALCQIfeat, \ALCQfeat, \ALCQI, \ALCQ which always compute a concept that fits a maximal overall number of positive and negative examples.
\end{restatable}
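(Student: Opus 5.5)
First handle \ALCQ; the other three languages then follow from the polynomial-time reductions of Lemma~\ref{lem:reduction}. Both reductions keep the example individuals unchanged, and they preserve fitting for each \emph{subset} of examples, not merely for the full sets $P,N$. We also need to translate a concept found in the reduced database back into the original language. For the second reduction, a concept name $A_{f\geq v}$ is replaced by $(f\leq v)$. This is correct because $A_{f\geq v}(a)\in J$ holds exactly when $a$ has an $f$-value $v'\leq v$. For the first reduction, $\overline r$ is replaced by $r^-$. Both translations increase size at most linearly.

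The main step for \ALCQ is as follows. First compute the coarsest \ALCQ bisimulation $\sim$ on $I$ in polynomial time, using standard partition refinement in which blocks are split by counting successors per block. Let $B_1,\dots,B_\ell$ be the resulting equivalence classes. Any \ALCQ concept is a union of classes, because bisimilar elements satisfy the same concepts. The converse is what we need: every union of classes is definable. Following the refinement rounds, we build for each class $B$ a characteristic concept $C_B$ with $C_B^I=B$. In round $0$ the blocks are defined by conjunctions of the concept names (and their negations) that occur in $I$. When a block $X$ is split in round $i+1$ according to the counts $c_{r,Y}=|\mn{succ}^r_I(a)\cap Y|$ for the round-$i$ blocks $Y$, a new block is defined as the conjunction of the defining concept of $X$ with the restrictions $(\geq c\ r.C_Y)\sqcap(\leq c\ r.C_Y)$. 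Here $c\leq|\adom(I)|$, so unary coding costs only polynomial size per node. There are at most $|\adom(I)|$ rounds and polynomially many blocks, and every $C_Y$ is reused, so the DAG representation has polynomial size.

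Optimality then reduces to a choice made class by class. For each class $B_j$, include $B_j$ in the output exactly when $|B_j\cap P|\geq|B_j\cap N|$, and output $\bigsqcup_{j\text{ chosen}} C_{B_j}$, taking $\bot$ if no class is chosen. Every concept is a union of classes and handles each class independently, so this concept maximizes the number of correctly classified examples. When some fitting concept exists, every class satisfies $B_j\cap P=\emptyset$ or $B_j\cap N=\emptyset$, so the output fits and the procedure is indeed a fitting algorithm.

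The main obstacle is the definability step. We must show that the iterated partition refinement for graded bisimulation reaches exactly $\sim$, and that the inductively built concepts define the blocks while keeping polynomial DAG size. This requires care with the counting conditions, since conditions 2 and 3 of the bisimulation definition amount to equal successor counts in every class. It also requires ensuring that in each round the concepts refer only to the previous round's blocks, so that the DAG never becomes cyclic. I would state this as a lemma, proved by induction on the refinement rounds, and derive Theorem~\ref{thm:optimal} from it.
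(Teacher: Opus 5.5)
Your proposal is correct and follows the paper's route. Like the paper, you compute the maximal \ALCQ bisimulation by partition refinement, keep in each class whichever of its positive or negative examples form the majority (the paper's Algorithm~\ref{alg:approximatefitting}), output a DAG-represented disjunction of defining concepts, and transfer the result to the other three logics via Lemma~\ref{lem:reduction}. The differences are minor: you build characteristic concepts for the classes directly instead of combining pairwise separating concepts $C_{ab}$ via Equation~\eqref{eq:overfit}; you make explicit the optimality argument via bisimulation invariance, which the paper leaves implicit; and your back-translation $A_{f\geq v}\mapsto(f\leq v)$ is the one consistent with the paper's definition of $J$, whereas the paper's appendix writes $(f\geq v)$.
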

While such optimal fitting algorithms are natural candidates to be used in practice, they cannot be sample-efficient PAC learning algorithms, under standard complexity assumptions. 
Indeed, if a polynomial time fitting algorithm for \ALCQIfeat was a sample-efficient PAC learning algorithm, it could be used as an efficient PAC learning algorithms for propositional logic which likely does not exist~\cite{DBLP:journals/jacm/KearnsV94}. 
Hence, the algorithms in Theorem~\ref{thm:optimal} are also not Occam.

\section{Bounded Fitting for \ALCQIfeat}\label{sec:boundedfitting}

Motivated by the fact that standard algorithms are not Occam or sample-efficient, we apply the bounded fitting paradigm given in Algorithm~\ref{alg:boundedfitting} to \ALCQIfeat, and investigate under which conditions the favorable properties are preserved. As we are interested in realistic implementations leveraging SAT solvers, we first establish that the core problem to be solved in Line~2 of Algorithm~\ref{alg:boundedfitting} is indeed \NPclass-complete for \ALCQIfeat.  

Formally, the \emph{size-restricted fitting problem} is defined as follows. Given database~$I$, positive and negative examples $P,N$ and $k\geq 0$, decide whether there is an $\ALCQIfeat$ concept $C$ of size $\lVert C \rVert\leq k$ that fits $P,N$ in $I$.
In the context of bounded fitting, it is natural to assume $k$ to be coded in unary.

\begin{restatable}{theorem}{thmsizefittingnpcomplete}\label{thm:size-fitting-np-complete}
      The size-restricted fitting problem for \ALCQIfeat is \NPclass-complete. Hardness already holds for inputs $I,P,N$ where $I$ is an $\{A,r,s\}$-database for $A\in\NC$ and $r,s\in \NR$, and $P,N$ are sets with $|P|=|N|=1$.
\end{restatable}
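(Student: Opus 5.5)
Membership in \NPclass and hardness are handled separately; hardness carries most of the work.

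\emph{Membership.} Since $k$ is given in unary, we guess a concept $C$ with $\lVert C\rVert\le k$. Every number occurring in a qualified number restriction is coded in unary, so it is at most $k$. Every feature value $v$ in a comparison $(f\bowtie v)$ may be replaced by a value occurring in $I$ (or a fixed extreme value) without changing $(f\bowtie v)^I$, because only the position of $v$ relative to the finitely many values $f(a,v')\in I$ matters. The guess is therefore polynomial. We then evaluate $C^I$ bottom-up over the subconcepts of $C$; each step is a set operation or a count of $|\mn{succ}^R_I(a)\cap D^I|$. This takes time polynomial in $\lVert C\rVert+\lVert I\rVert$, after which we check $P\subseteq C^I$ and $N\cap C^I=\emptyset$.

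\emph{Hardness.} I would adapt the known \NPclass-hardness of size-restricted fitting for \ALC and its fragments~\cite{ISWC25}. The main difficulty is that \ALCQIfeat is more expressive than \ALC: counting and inverse roles must not give a cheaper fitting concept than intended. Features play no role, since the database has no feature facts, so every $(f\bowtie v)$ is equivalent to $\bot$. I would reduce from a problem such as \textsc{Set Cover} or \textsc{3SAT} and build an $\{A,r,s\}$-database with one positive example $a$ and one negative example $b$. From $a$, an $r$-path of length $k$ encodes a chosen sequence of $k$ items, for instance elements or sets, with the choice at each level made by the branching of the $s$-edges. The tree below $b$ is built so that any concept distinguishing $a$ from $b$ with size at most $k'$ must be of the shape $\exists r.(\ldots \exists r.(\cdots))$, with conjuncts that spell out a cover or a satisfying assignment. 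The size bound $k'$ is chosen so that there is no slack for alternative constructs.

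For correctness, the easy direction turns a solution into a concept of size at most $k'$. The hard direction, and the expected main obstacle, is to show that every fitting concept of size at most $k'$ yields a solution, even though it may use number restrictions $(\ge n\, R.C)$ and inverse roles. To neutralise inverse roles, I would make the structures around $a$ and $b$ look identical backwards: both examples have no incoming edges, and the predecessor structure of inner nodes is uniform, for example through duplicated gadgets. To neutralise counting, I would make the relevant successor sets on the $a$ side and the $b$ side have equal multiplicities, so that \ALCQ bisimilarity of corresponding subtrees coincides with \ALC bisimilarity. Then any size-bounded \ALCQI concept separating $a$ from $b$ can be turned into an \ALC concept of no larger size that does the same. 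After that, the argument of~\cite{ISWC25} applies: track the subconcepts along the depth-$k$ path and read off the solution. If this uniformity argument becomes messy, I would fall back to proving directly, by induction on concept structure, that subconcepts of size below the threshold cannot distinguish the relevant pairs of nodes.
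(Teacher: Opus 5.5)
\textbf{Verdict: there is a genuine gap in the hardness part.} Your membership argument is fine and matches the paper's guess-and-check; the remark that feature values can be replaced by values occurring in $I$ is a detail the paper leaves implicit. The hardness part, however, is a design goal rather than a proof. You fix no source problem and define no database. The decisive direction, that every fitting concept of size at most $k'$ yields a solution, is deferred to~\cite{ISWC25} after two normalisation steps that are only announced. That direction is the whole content of the lower bound, and it is delicate even without $\mathcal{Q}$ and $\mathcal{I}$. Consider the paper's hitting-set gadget, where $\mathbf{b}_{j,i-1}$ has an $s$-detour iff $i\notin S_j$, on the no-instance $S=\{\{1\},\{2\}\}$, $k=1$, $k'=5$. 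The concept $\exists r.\exists s.\exists s.\exists s.\top$ has size $5$ and fits $\{a\},\{b\}$: $\mathbf{b}_{1,0}$ and $\mathbf{b}_{2,1}$ have no $s$-successor, while $\mathbf{a}_0$ and $\mathbf{a}_1$ do. So ``no slack for alternative constructs'' must be checked against every concept shape, including path concepts that never reach the end marker $A$. It cannot be taken over from a design intention.

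The step that fails in your plan is the treatment of counting. Making \ALCQ-bisimilarity coincide with \ALC-bisimilarity only controls \emph{which} pairs can be separated, not \emph{how large} the smallest separator is. So the inference ``then any size-bounded \ALCQI concept can be turned into an \ALC concept of no larger size'' does not follow. For instance, let $x$ have two $r$-successors that are bisimilar up to depth $L$ but not bisimilar, and let $x'$ have a single $r$-successor bisimilar to one of them. Then $x,x'$ are neither \ALC- nor \ALCQ-bisimilar, and the two notions coincide on the whole structure. Yet $(\geq 2\ r.\top)$ separates them at constant size, while every \ALC separator has depth greater than $L$. Equalising total successor counts does not help either, since $(\geq n\ R.E)$ counts $E$-successors for arbitrary short $E$.

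The missing idea is the one the paper uses: replace every node by a group of $k'+1$ copies, with all edges between related groups. Copies within a group are then automorphic, so every concept extension is a union of groups. Hence $|\mn{succ}^R_I(x)\cap E^I|$ is $0$ or at least $k'+1$ for every role or inverse role $R$. Since numbers are unary, a concept of size at most $k'$ only uses $M\le k'$. So for $1\le M\le k'$, $(\geq M\ R.E)$ and $(\leq M\ R.E)$ collapse to $(\geq 1\ R.E)$ and $(\leq 0\ R.E)$ without growing.

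For inverse roles, an ``identical backward structure'' is not something you can simply arrange while the forward structures must differ. Note also that even in tree-shaped data, eliminating inverses can increase size: $\forall r.\exists r^-.C$ becomes $\forall r.\bot\sqcup C$. The paper proceeds differently. It lets the positive root point to all paths of the negative root, so $a$'s successors include $b$'s. This rules out a top-level $\forall r$, and backward steps from shared nodes see both roots. It then removes $\exists R^-.E$ and $\forall R^-.E$ conjuncts from a minimal fitting concept by moving $E$ one level up, aiming at path concepts from which the solution is read off.
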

The upper bound is established by the usual \emph{guess-and-check} argument: given $I,P,N$, non-deterministically guess a candidate concept of size $k$, and deterministically verify that it fits $P, N$ in $I$ in time polynomial in $k$ and the size of $I$. The proof of the lower bound is more challenging. It extends the proof of the same result for \ALC~\cite{ISWC25}, but requires care to handle inverse roles and number restrictions.

While the \NPclass-upper bound from Theorem~\ref{thm:size-fitting-np-complete} is encouraging from the theoretical perspective, a direct reduction to SAT in the style of Cook's theorem~\cite{Cook71} is hardly practical. Previous reductions were based on the idea of measuring the complexity of the concept sought in stage $k$ of bounded fitting in terms of the number of nodes in the syntax tree or, equivalently, the number of logical operators~\cite{DBLP:conf/ijcai/CateFJL23,ISWC25}. The SAT solver then ``guesses'' the labels of the nodes in the syntax tree and evaluates whether the represented concept fits. This idea does not easily to \ALCQIfeat since the numbers that occur in qualified number restrictions and feature comparisons should in some way contribute to the complexity of the concepts. We discuss below the additions $\Imc,\Qmc,f$ to \ALC separately for the sake of clarity. Our bounded fitting algorithm for \ALCQIfeat combines all ideas.

We start with inverse roles. Observe that the first reduction in Lemma~\ref{lem:reduction} provides a transparent way of dealing with them. Consider the following algorithm $\mathbf A_\Imc$. Let $I,P,N$ be an input for the fitting problem for \ALCI. Then, $\mathbf A_\Imc$ computes $J,P,N$ for $J$ defined as in the proof of Lemma~\ref{lem:reduction}, and starts bounded fitting (for \ALC) on input $J,P,N$. If it finds an \ALC concept in this way, it outputs the corresponding \ALCI concept. It follows from the reduction and the fact that bounded fitting for \ALC is a complete Occam algorithm that $\mathbf A_\Imc$ is a complete Occam algorithm as well.

\begin{algorithm}[t]
\caption{Bounded fitting for \ALCQ.}\label{alg:boundedfittingalcq}
   \KwIn{Database $I$, examples $P,N$}
   
   \For {$k:=1,2,\ldots$}{
        \If{there is $C\in \ALCQ_{g(k)}^k$ that fits $P,N$ in $I$}{\Return $C$}
   }
\end{algorithm}

We next consider qualified number restrictions. Note that under the discussed measure of size (being the number of logical operators), there are, in principle, infinitely many concepts of a given size. To make the space of concepts considered in each stage of bounded fitting finite, we have to bound the numbers occurring in the concepts. 
To this end, we allow in stage $k$ numbers up to $g(k)$ in concepts, for some function~$g$. The idea is illustrated in Algorithm~\ref{alg:boundedfittingalcq}, where $\ALCQ^k_{g(k)}$ denotes the set of \ALCQ concepts that allow for at most $k$ logical operators and numbers in number restrictions at most $g(k)$. This approach is in the spirit of Occam's razor if $g$ is an increasing function as this reflects that larger numbers in number restrictions are considered more complex. Depending on the domain, the user may choose different functions reflecting their view on the importance of number restrictions. The following theorem provides the user with a range of functions $g$ which make Algorithm~\ref{alg:boundedfittingalcq} a complete Occam algorithm. 
\begin{restatable}{theorem}{thmalcqoccam}\label{thm:alcq-occam}
Let $g\in\Omega(k)\cap O(2^{p(k)})$ for some polynomial $p$. Then Algorithm~\ref{alg:boundedfittingalcq} is a complete Occam algorithm.
\end{restatable}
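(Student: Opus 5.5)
Completeness and the Occam property are handled separately, and in both the two assumptions on $g$ are used in opposite directions.

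\emph{Completeness.} Suppose some \ALCQ concept $C$ fits $P,N$ in $I$. Let $k_0$ be its number of logical operators and $n_0$ the largest number occurring in it. Since $g\in\Omega(k)$, $g$ grows without bound, so there is a $k\geq k_0$ with $g(k)\geq n_0$. We pad $C$ with trivial operators (for instance $C\sqcap\top$ repeatedly), or simply note that $\ALCQ^{k}_{g(k)}$ is closed under adding operators. Either way, $C$ or an equivalent concept lies in $\ALCQ^k_{g(k)}$, so Algorithm~\ref{alg:boundedfittingalcq} halts by stage $k$. Each stage terminates because $\ALCQ^k_{g(k)}$ is finite up to the choice of names, and only names from $\Sigma$ are relevant. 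Hence the algorithm always returns a fitting concept when one exists.

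\emph{Occam property.} Fix $\Sigma,s,m$ and an input with a fitting concept $C$ of size $\lVert C\rVert\leq s$. Then $C$ has at most $s$ operators, and since numbers are coded in unary, every number in it is at most $s$. The assumption $g\in\Omega(k)$ gives $g(k)\geq ck$ for large $k$. So at stage $k^*=\max(s,\lceil s/c\rceil,k_1)=O(s)$ we already have $C\in\ALCQ^{k^*}_{g(k^*)}$, and the algorithm stops at some stage $k\leq k^*$. Every output therefore lies in
\[
\mathcal H=\textstyle\bigcup_{k\leq k^*}\ALCQ^{k}_{g(k)}\text{ over }\Sigma .
\]
We bound the VC-dimension by $\log_2|\mathcal H|$. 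A concept in $\ALCQ^{k}_{g(k)}$ is a syntax tree with at most $k$ nodes. Each node is labelled by an operator, a name from $\Sigma$, and possibly a number in $\{0,\dots,g(k)\}$. This gives at most
\[
\bigl(c'(|\Sigma|+4)(g(k)+1)\bigr)^{O(k)}
\]
concepts, and tree shapes contribute only a factor $2^{O(k)}$. Taking logarithms, $\log|\mathcal H|=O\bigl(k^*(\log|\Sigma|+\log g(k^*))\bigr)$. Since $g\in O(2^{p(k)})$, we have $\log g(k^*)=O(p(k^*))$, so $\log|\mathcal H|$ is a polynomial in $s$ and $|\Sigma|$, independent of $m$. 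This yields the Occam bound with $\alpha=0$.

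The step that needs the most care is the counting, specifically confirming that the exponential upper bound on $g$ makes the number of admissible numerals contribute only polynomially many bits. A second point to check is that stage $k^*$ is linear in $s$. This genuinely needs $g\in\Omega(k)$: if $g$ grew sublinearly, the algorithm could be forced into stages superpolynomial in $s$ just to reach the number $s$, and the hypothesis class would no longer be polynomially bounded. I would also remark that concepts mentioning names outside $\Sigma$ can be assumed away, since such names are interpreted trivially on a $\Sigma$-database.
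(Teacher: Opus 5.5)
Your proposal is correct and follows essentially the same route as the paper: unary coding together with $g\in\Omega(k)$ forces the algorithm to halt by a stage $k^*=O(s)$. The VC-dimension is then bounded by $\log_2$ of the number of concepts with at most $k^*$ operators over $\Sigma$ and numbers up to $g(k^*)$, and $g\in O(2^{p(k)})$ keeps $\log g(k^*)$ polynomial.

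Two minor points. Like the paper, you write $g(k^*)$ where $\max_{k\le k^*}g(k)$ is really meant, since $g$ need not be monotone; this is harmless once $p$ is taken nondecreasing. Also, your closing remark overstates necessity: any $g$ with $g(k)\ge k^{\varepsilon}$ for large $k$ still keeps $k^*$ polynomial in $s$, so sublinear growth by itself does not break the argument.
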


Completeness follows from $g\in\Omega(k)$. For showing that Algorithm~2 is Occam we carefully analyze its space $\Hmc^{\Abf}(\Sigma,s,m)$ and show that its VC-dimension is polynomially bounded for any $g$ as in the theorem. 

We next look at feature comparisons. Similar to the number restrictions, there are potentially infinitely many concepts with a given number of logical operators, since features may have an infinite domain. Similar to what was done for \ALCQ, we could obtain an Occam algorithm by allowing in every stage $k$ of bounded fitting for, say, $k$ bits in the representation of the feature values. From a practical perspective, however, we believe that different feature values should contribute in the same way to a learned concept, e.g., the complexity of feature comparisons $(\text{salary}\geq n)$ should not depend on $n$. This perspective suggests to remove the feature comparisons via the reduction given in Lemma~\ref{lem:reduction}, similar to what we did for inverses. Unfortunately, this does not lead to an Occam algorithm. Let 
$\mathbf A_f$ be the algorithm obtained by first applying the reduction of fitting in \ALCfeat to fitting in \ALC from Lemma~\ref{lem:reduction} and then applying bounded fitting for \ALC to the result. Clearly, $\mathbf A_f$ is complete, but:  
\begin{restatable}{theorem}{thmreductionnotoccam}\label{thm:reduction-not-occam}
    Algorithm $\mathbf{A}_f$ is not an Occam algorithm.
\end{restatable}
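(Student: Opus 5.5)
The plan is to refute the Occam property directly. For a fixed signature $\Sigma$ and a fixed size bound $s$, I would show that the VC-dimension of $\Hmc^{\mathbf A_f}(\Sigma,s,m)$ grows at least linearly in $m$. This beats every bound of the form $p(s,|\Sigma|)\cdot m^\alpha$ with $\alpha<1$.

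\emph{Source of the failure.} The reduction of Lemma~\ref{lem:reduction} turns every comparison $(f\geq v)$ into a single fresh concept name $A_{f\geq v}$. Bounded fitting for \ALC on $J$ therefore charges one unit for it, no matter how long the representation of $v$ is. In addition, the signature of $J$ grows with the number of distinct feature values in $I$, and hence with the number of examples. The Occam guarantee for bounded fitting for \ALC is only polynomial in $|\Sigma_J|$, which is why it does not transfer back to \ALCfeat.

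\emph{Shattering gadget.} Fix $\Sigma=\{r,f\}$ and let $d\geq 1$. Take individuals $a_1,\dots,a_d$ and, for each $j\in\{0,\dots,2^d-1\}$, a node $c_j$ carrying the fact $f(c_j,j)$. Add $r(a_i,c_j)$ exactly when bit $i$ of $j$ is $1$. Now consider the concept
\[
C_j=\exists r.\big((f\geq j)\sqcap\neg(f\geq j+1)\big),
\]
which has a constant number of logical operators. We have $a_i\in C_j^I$ iff bit $i$ of $j$ is $1$. Hence the sets $\{a_1,\dots,a_d\}\cap C_j^I$ range over all subsets of $S=\{a_1,\dots,a_d\}$.

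Run $\mathbf A_f$ on $I$ with $m=d$ examples: for each $S'\subseteq S$, let $P=S'$ and $N=S\setminus S'$. Bounded fitting is complete, so the returned concept $C_{S'}$ fits $P,N$ in $I$, i.e.\ $S\cap C_{S'}^I=S'$. Thus $\Hmc^{\mathbf A_f}(\Sigma,s,m)$ shatters $S$ over the single database $I$, and its VC-dimension is at least $m$.

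\emph{Main obstacle.} The critical step is matching the parameter $s$. In the official size measure $\lVert\cdot\rVert$, the value $j$ contributes about $d=m$ symbols. So the witnessing fitting concept $C_j$ has size $\Theta(m)$, and $p(s,|\Sigma|)\cdot m^\alpha$ then also grows at least linearly in $m$, giving no contradiction. To handle this, I would adapt the gadget so that for every labeling of $S$ there is some fitting concept of small $\lVert\cdot\rVert$-size, for instance polylogarithmic in $m$. I would try to obtain it by using additional role structure (depth) instead of large constants. This small concept would typically use more logical operators.

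The gadget must then ensure that $\mathbf A_f$, which minimises the number of operators in $J$ and ignores the bit length of values, prefers the operator-cheap concept $C_j$ with a long value. Such a concept still has to realise the arbitrary labeling $S'$. If this works, $s$ is polylogarithmic while the VC-dimension is $\Omega(m)$, so no polynomial $p$ and $\alpha<1$ can work. I expect balancing these two requirements to be the delicate part of the construction. The first thing I would try is a binary-tree encoding of the index $j$ below each $a_i$, with one successor level per bit.
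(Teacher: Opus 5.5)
\emph{Gap: the step you flag as the main obstacle cannot be resolved within your framework.} Your diagnosis (one unit per comparison, regardless of the length of $v$) and your shattering gadget are both correct. In your scheme, however, the shattered set $S$ is also the training set, on a single database $I$, and every labeling $S'\subseteq S$ must admit a fitting concept of size at most $s$. Distinct labelings need distinct concepts. Since feature values are also written in the fixed alphabet, there are at most $(|\Sigma|+c)^{s+1}$ concepts of size at most $s$. Hence $2^m\le(|\Sigma|+c)^{s+1}$, so $m\le(s+1)(|\Sigma|+c)$. Taking $p(s,|\Sigma|)=(s+1)(|\Sigma|+c)$, the Occam bound $p(s,|\Sigma|)\cdot m^\alpha$ always dominates the lower bound $m$ that this scheme can give. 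So a fitting concept of polylogarithmic size for every labeling is impossible, whatever tree encoding or extra depth you use.

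\emph{Missing idea: decouple the inputs that produce the hypotheses from the witness of shattering.} The VC-dimension may be witnessed on any database, so it suffices that each $C_j$ is \emph{returned} on some input with $s$ and $m$ fixed. You get this from a feature-free decoy that is cheap in $\lVert\cdot\rVert$ but needs more syntax-tree nodes than $C_j$; this is the key device of the paper, whose target is $C_T=\exists t^s.\top$. For instance, give the positive example a $t$-path of length $6$ and $r$-successors with $f$-values $0,j-1,j,j+1,j+2$. Give the negative example a $t$-path of length $5$ and $r$-successors with values $0,j-1,j+1,j+2$. Then $\exists t^6.\top$ fits and has constant size. In the reduced database, no concept with at most four nodes fits, and the only five-node fitting concept is $\exists r.(A_{f\ge j}\sqcap\neg A_{f\ge j+1})$, i.e.\ $C_j$, up to the order of conjuncts. (Every successor type of the negative also occurs at the positive, so $\forall r$-concepts do not help.) Thus $\{C_j\mid j\ge 1\}\subseteq\mathcal H^{\mathbf A_f}(\{r,t,f\},s_0,2)$ for a constant $s_0$. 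Your gadget, with values shifted to start at $1$, then gives unbounded VC-dimension for fixed $s_0$ and $m=2$.

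The paper uses the same decoy but argues via PAC learning instead. On fewer than $2^{s-1}$ examples, $\mathbf A_f$ returns a concept $\exists r.((f\le j)\sqcap(f\ge j))$ that misclassifies half of the $2^s$ positive examples. So $\mathbf A_f$ is not sample-efficient PAC, and hence, by Blumer et al., not Occam.
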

The intuitive reason for this is that for sufficiently large $s,m$, one can find inputs to the algorithm so that $\mathbf A_f$ returns concepts of the form $\exists r.((f\le j)\sqcap (f\ge j))$, for any $j\in\mathbb{N}$. While an infinite space $\Hmc^{\Abf_f}(\Sigma,s,m)$ does not rule out being an Occam algorithm, we show that these concepts shatter exponentially sized sets of examples, making the VC dimension not polynomially bounded. Indeed, we show the stronger statement that $\mathbf{A}_f$ is not a sample-efficient PAC learning algorithm.

The proof of Theorem~\ref{thm:reduction-not-occam} relies on the presence of an unbounded number of feature values and on databases with unbounded outdegree. We next show that both these assumptions are necessary to achieve exponential VC-dimension. A class of databases has \emph{bounded outdegree} if there is a constant $\ell$ such that for all $a\in\adom(I)$, there are at most $\ell$ facts of shape $r(a,b)\in I$. Similarly, the class of databases has \emph{bounded domains} if the cardinalities of all feature domains are bounded by some constant. 

\begin{restatable}{theorem}{thmoccamconstantoutdegree}\label{thm:reduction-constantdegree-occam}
    Algorithm $\mathbf{A}_{f}$ is an Occam Algorithm if input databases have bounded outdegree or bounded domains.  
\end{restatable}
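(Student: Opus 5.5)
We bound the VC-dimension of $\Hmc^{\Abf_f}(\Sigma,s,m)$ by relating it to the VC-dimension of the hypothesis space of bounded fitting for \ALC, which is known to be an Occam algorithm~\cite{ISWC25}. Let $C$ be an \ALCfeat concept of size at most $s$ fitting $P,N$ in $I$. Replacing each $(f\ge v)$ by $A_{f\ge v}$ and each $(f\le v)$ by a Boolean combination of such names yields an \ALC concept of size $O(s)$ fitting $P,N$ in the reduced database $J$. Hence $\Abf_f$ returns an \ALC concept over $J$ of size $O(s)$, and translating back gives an \ALCfeat concept $D$ with at most $O(s)$ operators and feature comparisons over the $|\Sigma|$ features of $\Sigma$. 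The thresholds $v$ in $D$, however, range over arbitrary domain values. The only remaining task is therefore to control how many distinct behaviours these thresholds can produce on a shattered set $S\subseteq\adom(I)$ of size $d$.

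Fix such a set $S$. A standard counting argument gives $2^d\le |\{C^I\cap S \mid C\in\Hmc\}|$. It therefore suffices to bound the number of distinct traces $C^I\cap S$ by $2^{p(s,|\Sigma|)\cdot\mathrm{poly}(\log d)}$, or more directly to show $\log(\#\text{traces})\le p(s,|\Sigma|)\cdot d^{\alpha}$ for some $\alpha<1$. The syntactic skeleton of $D$, that is, $D$ with the thresholds erased, comes from a set of size $2^{O(s\log(s|\Sigma|))}$. It remains to count threshold assignments, which we do separately in the two cases.

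In the \emph{bounded domains} case, each threshold can take only a constant number $c$ of semantically distinct values. There are at most $s$ thresholds, so the total number of concepts is $2^{O(s\log(s|\Sigma|c))}$. The VC-dimension is then at most $O(s\log(s|\Sigma|))$, which is polynomial and independent of $m$.

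In the \emph{bounded outdegree} case, with outdegree at most $\ell$, we argue that the evaluation of $D$ on $S$ depends only on individuals reachable from $S$ within role depth at most $s$. There are at most $d\cdot \ell^{s+1}$ such individuals. Each threshold therefore only matters through its position relative to the at most $d\ell^{s+1}$ feature values occurring on these individuals, which gives at most $d\ell^{s+1}+1$ distinct behaviours per threshold. The number of traces is consequently at most $2^{O(s\log(s|\Sigma|))}\cdot(d\ell^{s+1}+1)^{s}$. Shattering requires $2^d$ to be at most this quantity, so $d\le O(s\log(s|\Sigma|)) + s\log(d\ell^{s+1}+1)$, which yields $d=O(s^2\log \ell + s\log s|\Sigma|)$. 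This is again polynomial in $s$ and $|\Sigma|$, since $\ell$ is a constant.

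The main obstacle is the locality step in the bounded outdegree case. Thresholds inside nested $\exists r$ or $\forall r$ subconcepts are evaluated at successors, not at individuals in $S$, so we must verify that the relevant value set is exactly the values of features at individuals within distance equal to the role depth from $S$. Two thresholds lying between the same consecutive relevant values induce the same semantics on that neighbourhood. We would prove this by induction on the concept structure, evaluating subconcepts on the depth-bounded neighbourhood. It is precisely this argument that fails for unbounded outdegree, in line with Theorem~\ref{thm:reduction-not-occam}.
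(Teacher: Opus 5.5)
Your proposal is correct and takes essentially the same route as the paper. For bounded domains you count concepts over a finite signature, where the paper instead cites the Occam property of bounded fitting over finite signatures. For bounded outdegree you restrict each threshold to its position among the at most $d\ell^{O(s)}$ feature values reachable from the shattered set, which is exactly the paper's \emph{reach-val} rounding argument followed by the same counting. Your version is slightly more careful than the paper's write-up: you solve $d \le O(s\log(s|\Sigma|)) + s\log(d\ell^{s+1}+1)$ for $d$, whereas the paper uses $m$ in place of $|E|=d$ when bounding the number of reachable values.
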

If domains are bounded, one can replace features with a fixed number of concept names. For bounded outdegree, we observe that for a concept $C$ of a given size $s$, there is an at most exponential number of feature values ``visible'' by $C$ in any given set of $m$ examples. This allows us to bound the VC-dimension of the space $\mathcal H^{\mathbf A_{f}}(\Sigma,s,m)$ over databases of bounded outdegree.

From a practical point of view, the preconditions  in Theorem~\ref{thm:reduction-constantdegree-occam} are not guaranteed to hold in all datasets, but in relevant cases, e.g., for features that describe days of the week, age of employees, or for roles such as child or spouse. A limitation in practice, however, is that using this route introduces a potentially large number of concept names which, as we shall see, is reflected in an increased number of propositional variables. We describe in Section~5.2 how we address this problem in our tool.

\section{Implementation}

We implemented bounded fitting for $\ALCQIfeat$. Our implementation follows the same general approach as ALCSAT~\cite{ISWC25}: in each stage of bounded fitting, the size-restricted fitting problem is solved by checking satisfiability of a suitable propositional formula, using a SAT solver. Users may specify the fragment of \ALCQIfeat that they are interested in. Moreover, our tool implements an approximation strategy that tries to compute a concept that fits the maximum number of examples. In the remainder of the section, we describe the reduction to SAT for \ALCQ, discuss how we handle inverses (\Imc) and features ($f$), and describe two optimizations. 

\subsection{SAT Encoding for $\ALCQ$}

\newcommand{\sigr}{\ensuremath{\Sigma_{\textsf{R}}}\xspace}
\newcommand{\sigc}{\ensuremath{\Sigma_{\textsf{C}}}\xspace}

We briefly recall some details of ALCSAT's implementation~\cite{ISWC25} in order to explain how
we extend it to $\ALCQ$. Let $I, P, N, k$ be an instance of the size-restricted fitting problem for $\ALC$.
ALCSAT constructs a propositional
formula $\varphi$ that is satisfiable if and only if there exists an $\ALC$ concept
$C$ of size $k$ that fits $P$, $N$ in $I$. Moreover, such a fitting concept can directly be obtained from a model of $\varphi$.
Let $\sigc$ and $\sigr$ be the sets of concept and role names that
occur in $I$. 
The formula $\varphi$ encodes the syntax tree of an $\ALC$ concept with $k$ nodes and its semantics such that in a model of $\varphi$,
for all $i, j \in \{1, \ldots, k\}$, $a \in \adom(I)$ and $v$ a node label from
$\Vmc = \{\top,\bot,\neg,\sqcap,\sqcup\} \cup \sigc \cup \{\exists r, \forall
r\mid r\in \sigr\}$,
the variable
$x_{i, v}$ is true iff node $i$ of the syntax tree is labeled with $v$, 
the variable $y_{1,i,j}$ is true iff node $i$ has the single successor $j$, 
the variable $y_{2, i, j}$ is true iff node $i$ has the two successors $j$ and $j + 1$, and
the variable $z_{i, a}$ is true iff $a \in C_i^I$ where $C_i$ is the concept
represented by node $i$ in the syntax tree.
For example, the semantics of existential restrictions is encoded by clauses that express 
\[
x_{i, \exists r} \land y_{1, i, j} \land z_{i, a} \to \bigvee \{z_{j, b} \mid r(a, b) \in I \}
\]
for $1 \leq i, j \leq k$, $r \in \Sigma_r$, $a \in \adom(I)$.
The requirement that the concept $C_1$ fits $P,N$ in $I$ is then encoded using $\bigwedge_{a \in P} z_{1, a}$ and $\bigwedge_{a \in N} \neg z_{1, a}$.

Our implementation extends this encoding by allowing additional vertex labels for qualified number restrictions and by encoding their semantics.
Given a bound $g(k)$ (recall Algorithm~\ref{alg:boundedfittingalcq} from the previous section) for numbers in qualified number restrictions, we use node labels
$(\leq n\ r)$ and $(\geq n\ r)$  for all $r \in \Sigma_r$ and $n$ with $0 \leq n \leq g(k)$.
Then, we encode the semantics of qualified number restrictions by adding clauses that encode
\begin{align*}
    x_{i, (\geq n\ r)} \land y_{1, i, j} \land z_{i, a} &\to ( |\{ z_{j, b} \mid r(a, b) \in I \}| \geq n) \\
    x_{i, (\geq n\ r)} \land y_{1, i, j} \land \neg z_{i, a} &\to ( |\{ z_{j, b} \mid r(a, b) \in I \} |\leq n - 1)
\end{align*}
for all $i, j$ with $1 \leq i, j \leq k$, $0 \leq n \leq g(k)$, $r \in \Sigma_r$ and $a \in \Delta^I$ to $\varphi$.
To encode the cardinality constraints in the above formulas as clauses,
we employ a sequential counter encoding, which for a bound of $k$ on $n$ literals uses $O(n \cdot k)$ additional variables and clauses~\cite{DBLP:conf/cp/Sinz05}.
In total, this results in $O(k^2 \cdot |I| \cdot |\Sigma| \cdot g(k)^2 )$ 
additional clauses.

\subsection{Handling Inverses and Features}

We treat inverses as described in the previous section, that is, via the (first) reduction in Lemma~\ref{lem:reduction}. It is worth noting that EvoLearner implements the same strategy~\cite{DBLP:conf/www/HeindorfBDWGDN22}.

To handle features, we implement a variant of the (second) reduction in Lemma~\ref{lem:reduction}. Instead of introducing a single concept name $A_{f\geq v}$ for \emph{every value} $v$ that occurs in $I$, we introduce such concept names only for a \emph{restricted number} $n_f$ of these values. This alone would result in an incomplete algorithm (as we may miss fitting concepts), so we incrementally increase the number $n_f$ along with the stages of bounded fitting. In practice we split the $\dom(f)$ into $n_f$ equally sized intervals. 

DL-Learner follows a similar approach~\cite{DBLP:series/ssw/Lehmann10}. EvoLearner extends that by an entropy-based selection of the values given at the examples $P,N$~\cite{DBLP:conf/www/HeindorfBDWGDN22}.

\subsection{Optimizations}

We present two optimizations. The first relies on \ALCQ bisimulations and is not specific to bounded fitting; it can in principle be used with all other tools. The idea is to exploit that $\ALCQ$ concepts cannot distinguish between $\ALCQ$ bisimilar individuals, and hence to replace the input database $I$ with a smaller database $I'$ in which few elements are bisimilar; we need to keep copies of bisimilar elements, due to the qualified number restrictions. 

For a database $I$ and for every $a \in \adom(I)$, let $[a]$ denote the $\sim$-equivalence class of $a$ in $I$.
The \emph{$\sim$-quotient} of $I$ is the database $I'$ over individuals
$\langle [a], i \rangle $ with $a \in \adom(I)$ and 
\[1 \leq i \leq \max_{b \in \adom(I), r \in \NR} | \mn{succ}^r_I(b) \cap [a] |, \] 
and the following facts for all $A \in \NC$ and $r \in \NR$:
\begin{align*}
    A(\langle [a], i \rangle) & \text{ if } A(a) \in I\\
     r(\langle [a], i \rangle, \langle [b], j \rangle) & \text{ if } j \leq | \mn{succ}^r_I(a) \cap [b] |.
\end{align*}

Using the observation that $I, a$ is $\ALCQ$ bisimilar to $I', \langle a, i\rangle$ for any $i$, one can show the following.
\begin{lemma}\label{lem:bisim-reduction-preserves-concepts}
    Let $I, P, N$ be a fitting problem instance for $\ALCQ$. Every $\ALCQ$ concept $C$
    fits $P, N$ in $I$ if and only if $C$ fits
    $\{ \langle [p], 1 \rangle \mid p \in P \}, \{ \langle [n], 1 \rangle \mid n \in N\}$ in $I'$.
\end{lemma}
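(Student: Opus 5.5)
The plan is to establish the key observation that $I,a$ is \ALCQ bisimilar to $I',\langle [a], i\rangle$ for every $a \in \adom(I)$ and every $i$, and then use the standard fact that \ALCQ concepts are invariant under \ALCQ bisimulations. Strictly speaking, the bisimulation notion in the definition is stated on a single database. We therefore work with the disjoint union $I \uplus I'$, renaming individuals apart if necessary. Since concepts are evaluated locally along successor paths, $a \in C^I$ iff $a \in C^{I\uplus I'}$, and similarly for $I'$. Once the observation is proved, for every $p\in P$ we have $p \in C^I$ iff $\langle [p],1\rangle \in C^{I'}$, and likewise for $N$, which gives the lemma. We also need a small well-definedness check: $\langle [a],1\rangle$ exists, i.e.\ the bound on $i$ is at least $1$. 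This holds as long as $a$ has some predecessor. Otherwise we adopt the reading that index $1$ always exists, i.e.\ take $\max(1,\cdot)$. We also need that the facts of $I'$ do not depend on the chosen representatives $a,b$ of the classes. For concept names this follows from condition~1. For roles it follows from conditions~2 and~3, which force $|\mn{succ}^r_I(a)\cap[b]| = |\mn{succ}^r_I(a')\cap[b]|$ whenever $a\sim a'$.

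The relation we use is $S = \{(a,\langle [a'],i\rangle),(\langle [a'],i\rangle,a) \mid a\sim a',\ i \text{ valid}\}$ on $I\uplus I'$. We assume (or prove first, by the usual argument that the union of bisimulations is a bisimulation) that $\sim$ is the largest \ALCQ bisimulation on $I$ and is an equivalence relation. Condition~1 is immediate from the definition of concept facts in $I'$ together with condition~1 for $\sim$.

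For conditions~2 and~3, let $(a,\langle [a],i\rangle)\in S$ and $r\in\NR$. Partition $\mn{succ}^r_I(a)$ by $\sim$-classes: $\mn{succ}^r_I(a) = \biguplus_{[b]} (\mn{succ}^r_I(a)\cap[b])$. By construction, $\mn{succ}^r_{I'}(\langle[a],i\rangle)$ contains exactly the elements $\langle [b],j\rangle$ with $1\le j\le |\mn{succ}^r_I(a)\cap[b]|$. So for each class $[b]$ the two successor sets have the same number of elements in that class. Take a fixed bijection $\pi$ between $\mn{succ}^r_I(a)$ and $\mn{succ}^r_{I'}(\langle [a],i\rangle)$ that maps each $[b]$-block onto the corresponding $\langle[b],\cdot\rangle$-block. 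Every pair $(c,\pi(c))$ has $c\sim b$, so it lies in $S$. Given any $D\subseteq \mn{succ}^r_I(a)$, we take $E=\pi(D)$. Given any $E$ on the other side, we take $D=\pi^{-1}(E)$. The same bijection handles the symmetric pairs in $S$. Hence $S$ is an \ALCQ bisimulation on $I\uplus I'$.

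Next we prove invariance: if $(c,d)$ lies in an \ALCQ bisimulation, then $c\in C$ iff $d\in C$ for every \ALCQ concept $C$. This goes by induction on $C$. The only interesting case is $(\bowtie n\ r.D)$. We apply condition~2 to $\mn{succ}^r(c)\cap D$ and obtain a subset of $\mn{succ}^r(d)$ of equal size whose elements, by the induction hypothesis, are all in $D$. This shows that the count at $d$ is at least the count at $c$, and condition~3 gives the reverse inequality. So the counts are equal, and the restriction holds at $c$ iff it holds at $d$. This invariance is the cited result of de Rijke and we may simply invoke it.

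The main obstacle is the careful bookkeeping in conditions~2 and~3. The definition of $I'$ uses class-wise counts, so we must verify that these counts are representative-independent. That is exactly where the assumption that $\sim$ itself satisfies the bijection conditions is used. Everything else is routine.
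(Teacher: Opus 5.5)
Your proposal is correct and takes the same route as the paper. The paper's entire justification is the observation that $I,a$ is \ALCQ bisimilar to $I',\langle [a],i\rangle$, combined with bisimulation invariance of \ALCQ concepts. You supply the details the paper omits: representative-independence of the class-wise counts, the block-preserving bijection, and the disjoint-union setting. Your point that the index bound must be read as $\max(1,\cdot)$, so that $\langle [p],1\rangle$ exists for examples without predecessors, is a genuine fix to the paper's definition of $I'$.
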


Hence, on input $I, P, N$ our implementation computes $I'$, and then runs bounded fitting on $I'$. To compute the \ALCQ bisimulation equivalence classes in $I$, we use a variant of the color refinement algorithm
(see e.g.~\cite{GroheColor2021}), 
which can be implemented to run in $O(|I| \log |I|)$.

Our second optimization concerns parallelization. By design, bounded fitting is suitable for parallelization: we can simply start each stage in its own thread. We follow a slightly different route that was proposed in the context of fitting LTL formulas~\cite{DBLP:conf/fdl/Riener19}. We split the search space at a given stage $k$ into several pieces and start search in each subspace on a different core based on the topology of the concepts. For this purpose we enumerate (outside the SAT solver) all possible topologies of syntax trees, split them into the number of available threads and start bounded fitting in each thread with the additional constraint to consider only the given topologies. 

\medskip

\section{Experiments}

We evaluate different aspects of our tool on the SML-Benchmarks~\cite{DBLP:journals/semweb/WestphalBBJL19} and on benchmarks for $\ALCQ$ fitting we generated from the YAGO 4.5 knowledge base~\cite{DBLP:conf/sigir/SuchanekABCPS24}. All experiments were executed on a MacBook Pro with M3 Pro chip and 36GB RAM.\footnote{Our implementation, the benchmarks, and instructions to reproduce our experiments are available at \url{https://github.com/SAT-based-Concept-Learning/ALCSAT}.}

We refer to accuracy and F1-score of concepts $C$ on examples $P, N$ in $I$. With accuracy we mean the percentage of elements of $P$ and $N$ that are labeled correctly by $C$, formally $\frac{\lvert C^I \cap P \rvert + \lvert N \setminus C^I\rvert}{\lvert P \rvert + \lvert N \rvert}$. As usual, with F1-score we mean the harmonic mean of precision and recall of $C$. 

\subsection{Evaluating Generalization}
\begin{table*}[t!]
\footnotesize
\begin{tabular}{lcccccc}
\toprule
&  Carcinogenesis &  Hepatitis &  Lymphography &  Mammographic & Premierleague & Pyrimidine\\ \midrule
\multirow{2}{*}{EvoLearner} &
$5.50 \pm  1.51$ & 
$4.50 \pm  1.35$ & 
$17.40 \pm  9.03$ & 
$3.20 \pm  1.48$ & 
$5.00 \pm  0.94$ & 
$6.40 \pm  1.26$ 
\\
& $0.71 \pm  0.01$ & 
$0.77 \pm  0.07$ & 
$0.84 \pm  0.08$ & 
$0.78 \pm  0.05$ & 
$1.00 \pm  0.00$ & 
$0.91 \pm  0.14$ 
\\
\midrule
\multirow{2}{*}{TDL} &
$879.60 \pm  132.97$ & 
$1436.80 \pm  46.49$ & 
$112.10 \pm  23.33$ & 
$5780.70 \pm  302.98$ & 
$298.40 \pm  35.22$ & 
$15.90 \pm  3.78$ 
\\
&$0.62 \pm  0.07$ & 
$0.83 \pm  0.05$ & 
$0.82 \pm  0.11$ & 
$0.69 \pm  0.03$ & 
$0.47 \pm  0.37$ & 
$0.82 \pm  0.24$ 
\\
\midrule
\multirow{2}{*}{Theorem \ref{thm:optimal}} & 
$4715.60 \pm  455.32$ & 
$53269.80 \pm  1549.82$ & 
$330.80 \pm  18.98$ & 
$1925.40 \pm  72.76$ & 
$23.40 \pm  1.26$ & 
$60.80 \pm  3.94$ 
\\
&$0.56 \pm  0.06$ & 
$0.53 \pm  0.06$ & 
$0.74 \pm  0.13$ & 
$0.76 \pm  0.05$ & 
$1.00 \pm  0.00$ & 
$0.90 \pm  0.16$ 
\\\midrule
\multirow{2}{*}{Our Tool} & 
$6.00 \pm  0.00$ & 
$5.00 \pm  0.00$ & 
$10.00 \pm  0.00$ & 
$9.10 \pm  0.32$ & 
$2.00 \pm  0.00$ & 
$5.90 \pm  0.32$ 
\\
&$0.73 \pm  0.10$ & 
$0.81 \pm  0.03$ & 
$0.83 \pm  0.08$ & 
$0.81 \pm  0.05$ & 
$0.97 \pm  0.05$ & 
$0.95 \pm  0.12$ 
\\

\bottomrule
\end{tabular}
\caption{Concept sizes (first line) and F1-scores (second line) of concept learning tools on the SML-benchmarks.}
  \label{tab:generalization}
\end{table*}

We compare the ability of our implementation to produce $\ALCQfeat$ concepts that generalize to unseen examples on the standard SML-benchmarks with the state-of-the art concept learning implementations EvoLearner and TDL\footnote{
We use the implementations of these systems available in OntoLearn \url{https://github.com/dice-group/Ontolearn} development version fd38beec65e85e56e50733d404360b1feb5f5a68, as TDL of version 0.9.2 of OntoLearn reported wrong accuracies} as well as the algorithm underlying Theorem~\ref{thm:optimal}. We do not compare against the standard DL-Learner algorithms CELOE and OCEL, as EvoLearner is known to deliver concepts of higher F1-score~\cite{DBLP:conf/www/HeindorfBDWGDN22}. We configured all systems to learn $\ALCQfeat$ concepts with number restrictions up to a cardinality of $3$. In the case of TDL this means disabling nominals. This places TDL at a disadvantage as it does not support numerical features (only Boolean features). We allowed all tools to run for 5 minutes. TDL and the algorithm underlying Theorem~\ref{thm:optimal} always terminated before; for the anytime algorithms EvoLearner and our tool, we used the best concept produced so far.

Table~\ref{tab:generalization} shows the results (mean and standard deviation over all runs) of 10-fold cross validation measuring the resulting concept size and F1-score on each data set.
We omit results for the Mutagenesis and Nctrer benchmarks, as
they possess a small perfect fitting $\ALCQfeat$ concept that is almost always discovered by the systems. We remark that no implementation performs particularly well on the Carcinogenesis benchmark, where already the concept $\top$ achieves an F1-score of 0.71.

Overall these results demonstrate that bounded fitting performs competitively with state-of-the-art concept learners and is suitable to produce good concepts in practice.  
As expected, the algorithm underlying Theorem~\ref{thm:optimal} produces very large concepts that overfit and do not generalize.

\subsection{Evaluating Fitting for Number Restrictions}

We compare the ability of our implementation to find fitting $\ALCQ$ concepts with that of EvoLearner and TDL. To this end, we created a new benchmark which contains difficult concept learning problems, where qualified number restrictions are essential to separate the positive examples from the negative examples. That is, it contains only instances $I,P,N$ such that there is an $\ALCQ$ concept fitting $P,N$, but no \ALC-concept.

For this, we extract examples from the YAGO 4.5 knowledge base~\cite{DBLP:conf/sigir/SuchanekABCPS24}, focusing on the relations \texttt{children}, \texttt{spouse}, and \texttt{gender}.
We first construct from each $\ALC$-bisimulation equivalence class a single benchmark, by partitioning the equivalence class into $\ALCQ$ bisimulation equivalence classes, taking representatives of half the $\ALCQ$ bisimulation equivalence classes as positive examples and representatives of the other half as negative examples. In a second step, we generate benchmarks with a larger number of examples by combining the positive and negative examples of two benchmarks generated in the first step. 
 We generate $120$ benchmarks with the number of examples ranging from $4$ to $35$. An example of a fitting $\ALCQ$ concept in these benchmarks is the concept $(\le 3\  \texttt{children}.\top)\sqcap (\le 1\ \texttt{children}.\exists\ \texttt{spouse}.\top)$. 
We believe that these benchmarks are of independent interest to evaluate the ability of concept learning implementations to find good $\ALCQ$ on real-world data.

\begin{figure}[!t]
 \centering
    \begin{tikzpicture}
           \begin{axis}[
                   xmin = 0,
                   xmax = 40,
                   xtick = {0,4,...,40},
                   ymin = 0.5,
                   ymax = 1.1,
                   width=8cm, height=5cm,
                   grid = major,
                   grid style={dashed, gray!30},
                   ylabel=\small accuracy / f1-score,
                   xlabel=\small number of examples,
                   ylabel style={yshift=-0.3cm},
                   legend style={outer sep=0pt, at={(0,0), anchor = north east,  font=\tiny,/tikz/every even column/.append style={
            anchor=west 
        }}, nodes={anchor=west},legend columns=2, anchor=north},
                   legend pos=north east,
                    draw=none
                ]                     
               \addplot[only marks, mark =o] table [col sep=comma, y = a_avg_alcsat, x=m] {data/data_avg.csv};
               \addlegendentry{/}
              
               \addplot[only marks, mark =-] table [col sep=comma, y = f_avg_alcsat, x=m] {data/data_avg.csv};   
               \addlegendentry{Our Tool}
              
                \addplot[only marks, mark =o, red] table [col sep=comma, y = a_avg_evo, x=m] {data/data_avg.csv};
                \addlegendentry{/}
               \addplot[only marks, mark =-, red] table [col sep=comma, y = f_avg_evo, x=m] {data/data_avg.csv}; 
               \addlegendentry{EvoLearner}
                \addplot[only marks, mark =o, blue] table [col sep=comma, y = a_avg_tdl, x=m] {data/data_avg.csv};
                 \addlegendentry{/}
               \addplot[only marks, mark =-, blue] table [col sep=comma, y = f_avg_tdl, x=m] {data/data_avg.csv};
               \addlegendentry{TDL}

           \end{axis}
       \end{tikzpicture}
    \caption{Accuracies (circles) and F1-scores (hyphens) for $\ALCQ$.}
    \label{fig:exp-fitting}   
    \vspace{-1em}
\end{figure}

Figure~\ref{fig:exp-fitting} shows the result of comparing mean accuracy and F1-score of the best concept produced within 5 minutes (as in Sec.~6.1) by TDL, EvoLearner (configured to optimize accuracy or F1-score), and our tool on  these benchmarks, ordered by increasing number of examples.
In all benchmarks, our tool is able to find concepts of higher accuracy and F1-score, in some cases even discovering perfect fittings. This indicates that our approach to handle qualified number restrictions has advantages over other tools, for fitting real-world examples.

\subsection{Effect of Features Value Selection}

We evaluate the effect of our approach to fit $\ALCQfeat$ concepts by reduction to the fitting problem for $\ALCQ$. For this, we run our implementation of bounded fitting up to size 8 on SML-benchmarks and vary the number of selected feature values. We use the SML-Benchmarks Mammographic, Mutagenesis, and Suramin, which contain features with domains of size $74$, $529$, and $157$, respectively. Other SML-Benchmarks do not use numeric features to the same extent.

\begin{table}[!t]
\centering
\footnotesize
\begin{tabular}{rrrrrrr}
\toprule
& \multicolumn{2}{c}{Mammographic} & \multicolumn{2}{c}{Suramin} & \multicolumn{2}{c}{Mutagenesis} \\
Values & \textsc{acc} & t & \textsc{acc} & t & \textsc{acc} & t \\
\midrule 
0 & 0.79 & 5.6 & 0.82 & 9.7 & 0.88 & 13 \\
2 & 0.80 & 24 & 0.82 & 11.7 & 0.95 & 8.9 \\
5 & 0.84 & 24.2 & 0.82 & 14.2 & 1.00 & 2 \\
10 & 0.84 & 10.5 & 0.88 & 14.9 & 1.00 & 2.3 \\
20 & 0.84 & 15.8 & 0.88 & 16.9 & 1.00 & 0.3 \\
1000 & 0.84 & 47.9 & 0.94 & 16.9 & 1.00 & 0.6 \\\bottomrule
\end{tabular}
\caption{Accuracy and running time in seconds of bounded fitting up to size 8 with different numbers of values. Average of 3 runs.
}
\label{tab:number-of-intervals}
\vspace{-1em}
\end{table}

The measured accuracy on the training data and running time in seconds for different number of values are shown in Table~\ref{tab:number-of-intervals}, averaged over three runs. The first row for 0 values reports results when only Boolean features are used.
As expected, increasing the number of values allows bounded fitting to find concepts that achieve higher accuracy, but this generally also increases running time, due to additional concept names. 
However, the running time does not always monotonically increase,
as the running time of the SAT solver varies unpredictably with small changes.
These results show that our approach to encode values to obtain a propositional formula is effective, and that on the SML-benchmark already a small number of such values suffices.

\subsection{Effect of Optimizations}

We evaluate the effect of the two presented optimizations.
First, we run our implementation of bounded fitting up to size~8 on SML-benchmarks, and measure the required running time with and without the bisimulation-based preprocessing. We also vary the fragment of $\ALCQIfeat$ and measure the number of individuals before and after preprocessing.

\begin{table}[!t]
\centering
\footnotesize
\begin{tabular}{ccrrrrr}\toprule
   & \multicolumn{2}{c}{\scriptsize Mammographic} & \multicolumn{2}{c}{\scriptsize Hepatitis} & \multicolumn{2}{c}{\scriptsize Lymphography} \\
  & t & \scriptsize $|I|$ & t & \scriptsize $|I|$ & t & \scriptsize $|I|$ \\
 \midrule
 \multirow{2}{*}{\scriptsize \ALC}  &  10  & 975 & 36.9 & 6812  & 18.9 & 148 \\ 
  & 5.8 & 94 & 1.6 & 9 & 19 & 148\\\midrule
 \multirow{2}{*}{\scriptsize \ALCfeat}   &  17 & 975 & 936.3 & 6812 & 19.8& 148\\
  & 9.2 & 364 & 502 & 3720 & 19.5 & 148 \\\midrule
  \multirow{2}{*}{\scriptsize \ALCQfeat}  & 19.4 & 975 & 3668.9 & 6812 & 19.3 & 148 \\
  & 10.5 & 364 & 2816.7 & 5253 & 19.3 & 148 \\\midrule
  \multirow{2}{*}{\scriptsize\ALCQIfeat}  &  60.9 & 975 & 9063.4 & 6812 & 18.8 & 148\\
  &  40.7 & 975 & 9123.6 & 6812 & 18.7 & 148 \\\bottomrule
\end{tabular}
\caption{Running time in seconds of bounded fitting up to size 8 and domain sizes without preprocessing (first line) and with preprocessing (second line). Average of 3 runs.}
\label{table:exp-bisim}
\vspace{-1em}
\end{table}

Table~\ref{table:exp-bisim} shows 
the results on the Mammographic, Hepatitis and Lymphography benchmarks. Again, we report the average of the running time over three runs.
The measured number of individuals shows that on some benchmarks like Lymphography, the preprocessing has no effect, but also does not slow down the implementation.
On other benchmarks, like Mammographic, the number of individuals is reduced which directly results in faster computation.
Naturally, when fitting an $\ALC$ concept more individuals can be identified as when fitting a more expressive concept. This indicates that exploiting bisimilarity is an effective preprocessing step for fitting algorithms, especially when fitting a concept from a less expressive logic.

Second, we determine the efficacy of the implemented parallelization scheme by comparing the running times of our implementation with different number of threads on a selection of the SML-benchmarks.
We measure the needed running time on the Mammographic, Carcinogenesis and Lymphography benchmarks  to run bounded fitting up to size 8 with $t=1, 2, 4$ and $8$ worker threads.

We report full results in the appendix, and only mention the results on the Lymphography benchmark here:
the mean measured running time over three runs with one worker thread is 51.8 seconds, with two worker threads 30.8 seconds, with four worker threads 21.5 seconds and with 8 worker threads 19.6 seconds. Thus, the running times decrease when the number of threads is increased. This effect is strongest when going from 1 to 2 threads.

These results indicate that our parallelization scheme is effective in making use of multiple threads, and enables our implementation to find larger concepts in practical time frames.

\section{Conclusion}

Our theoretical investigation has shown how to
extend bounded fitting to handle inverse roles, qualified number restrictions, and feature comparisons, to preserve theoretical guarantees about its generalization abilities. Our 
practical evaluation confirms our findings and shows that bounded fitting is a promising approach for description logic concept learning, also for expressive description logics. 

We believe that new benchmarks are needed to better evaluate single features of the implemented systems. We made a first step into this direction by providing a new benchmark using which we can test the ability of concept learners to deal with qualified number restrictions. We would like to create similar benchmarks for inverse roles and feature comparisons.

\section*{Ethical Statement}

There are no ethical issues.

\clearpage
\section*{Acknowledgments}
Jean Christoph Jung and Tom Voellmer were supported by DFG grant JU 3197/1-1.

\bibliographystyle{named}
\bibliography{main}

\cleardoublepage

\appendix


\section{Proofs for Section~\ref{sec:fitting}}

We give the remaining details for proving Theorems~\ref{thm:fitting} and Theorem~\ref{thm:optimal}. We recall first the following characterization of fitting for single individuals~\cite{DBLP:journals/sLogica/Rijke00}.
\begin{lemma}
  Let $I$ be a database and $a,b\in\adom(I)$. Then there is an \ALCQ concept fitting $\{a\},\{b\}$ in $I$ iff $I,a\not\sim I,b$.
\end{lemma}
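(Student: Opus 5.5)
We prove the two directions separately: soundness of \ALCQ concepts under bisimulation, and construction of a distinguishing concept by partition refinement. Throughout we assume $I$ is finite.

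\emph{($\Leftarrow$, contrapositive).} Suppose $S$ is an \ALCQ bisimulation with $(a,b)\in S$. We show by structural induction on \ALCQ concepts $C$ that for all $(c,d)\in S$ we have $c\in C^I$ iff $d\in C^I$.
\begin{itemize}
\item The cases $\top$ and concept names follow from condition~1.
\item The Boolean cases $\neg$ and $\sqcap$ are immediate from the induction hypothesis.
\item For $(\geq n\ r.C)$, let $c\in(\geq n\ r.C)^I$. Choose $D\subseteq \mn{succ}^r_I(c)\cap C^I$ with $|D|=n$. Condition~2 gives $E\subseteq\mn{succ}^r_I(d)$ and a bijection between $D$ and $E$ contained in $S$. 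By the induction hypothesis $E\subseteq C^I$, and $|E|=n$. Hence $d\in(\geq n\ r.C)^I$. The converse uses condition~3.
\item The case $(\leq n\ r.C)$ is the negation of $(\geq n{+}1\ r.C)$, so it reduces to the previous case.
\end{itemize}
Consequently no concept separates $a$ from $b$.

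\emph{($\Rightarrow$).} We define the graded-bisimulation refinement sequence $\sim_0\supseteq\sim_1\supseteq\cdots$ on $\adom(I)$:
\begin{itemize}
\item $c\sim_0 d$ iff $c$ and $d$ satisfy the same concept names occurring in $I$;
\item $c\sim_{i+1}d$ iff $c\sim_i d$ and, for every $r$ occurring in $I$ and every $\sim_i$-class $K$, $|\mn{succ}^r_I(c)\cap K|=|\mn{succ}^r_I(d)\cap K|$.
\end{itemize}
Since $\adom(I)$ is finite, the sequence stabilizes at some $\sim_\ell$.

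We first show that $\sim_\ell$ is an \ALCQ bisimulation. Condition~1 holds by the definition of $\sim_0$. For condition~2, take $c\sim_\ell d$ and $D\subseteq\mn{succ}^r_I(c)$, and split $D$ by $\sim_\ell$-classes. Stability means $\sim_\ell=\sim_{\ell+1}$, so $d$ has at least as many $r$-successors in each class $K$ as $c$ has in $\mn{succ}^r_I(c)\cap K$, and hence at least $|D\cap K|$. Pick $E\cap K$ to be any $|D\cap K|$ of them and match within each class. This yields a bijection contained in $\sim_\ell$. Condition~3 follows by symmetry.

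So if $a\not\sim b$, then $a\not\sim_\ell b$. It remains to show by induction on $i$ that every $\sim_i$-class $K$ is defined by some \ALCQ concept $C_K$, i.e.\ $C_K^I=K$.
\begin{itemize}
\item For $i=0$, take the conjunction of the concept names true at the class and the negations of those false there, over the finitely many names in $I$.
\item For $i+1$, a $\sim_{i+1}$-class is contained in a $\sim_i$-class $K'$ and is determined by the counts $n_{r,K}$ of its $r$-successors in each $\sim_i$-class $K$. Take
\[
C_{K'}\sqcap\bigsqcap_{r,K}\big((\geq n_{r,K}\ r.C_K)\sqcap(\leq n_{r,K}\ r.C_K)\big).
\]
Only finitely many roles and classes occur, so the conjunction is finite.
\end{itemize}
Then $C_{[a]_\ell}$ contains $a$ and not $b$, so it fits $\{a\},\{b\}$.

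The main care point is the stabilization argument. Condition~2 quantifies over arbitrary subsets $D$, and handling it requires the class-wise matching above, which depends on the successor counts being equal per class. Everything else is routine.
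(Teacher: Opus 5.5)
Your proof is correct and follows the same partition-refinement route the paper uses in the proof of its polynomial-time bisimilarity lemma; for this statement itself the paper only cites de Rijke. Your $(\Rightarrow)$/$(\Leftarrow)$ labels are swapped relative to the statement as written, but both directions are covered. The only substantive differences are these. You prove bisimulation invariance yourself by structural induction, which the paper leaves to the citation. You also build concepts $C_K$ with $C_K^I=K$ for whole classes, whereas the paper builds pairwise separators $C_{de}$, combines them into a class concept $D_X$, and uses a single restriction $(\leq k\ r.D_X)$.
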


It follows that there is an \ALCQ concept fitting $P,N$ in a database $I$ iff $I,a\not\sim I,b$ for every $a\in P$ and $b\in N$. For the non-trivial ``if''-direction, suppose $I,a\not\sim I,b$ for all $a\in P,b\in N$ and let $C_{ab}$ be the \ALCQ concepts witnessing this, that is, $a\in C_{ab}^{I}$ but $b\notin C_{ab}^{I}$, for all $a\in P,b\in N$. Then
\begin{equation}
\bigsqcup_{a\in P}\bigsqcap_{b\in N}C_{ab}\label{eq:overfit}
\end{equation}
is an \ALCQ concept fitting $P,N$ in $I'$.

Hence, to show Theorem~\ref{thm:fitting} for $\ALCQ$, it suffices to prove the following lemma. It is important to note that we assume throughout this section that concepts are \emph{represented succinctly}, that is, we consider DAG representation which essentially counts the number of non-isomorphic sub-concepts (rather than the number of nodes in the syntax trees). This assumption is necessary to get efficient constructions: otherwise it is known that there families of inputs to the fitting problem for which the smallest fitting concepts is of exponential size~\cite{DBLP:conf/aiml/FigueiraG10}.
\begin{restatable}{lemma}{lemalcqbisi}\label{thm:alcqbisi}
    There is a polynomial time algorithm that decides, given a database~$I$ and $a,b\in\adom(I)$, whether $I,a\sim I,b$. Moreover, in case $I,a\not\sim I,b$, it outputs the succinct representation of a concept $C_{ab}$ with $a\in C_{ab}^I$ and $b\notin C_{ab}^I$.
\end{restatable}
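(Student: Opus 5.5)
We compute the greatest \ALCQ bisimulation on $I$ by partition refinement (graded colour refinement), while recording for every split a distinguishing concept. We maintain a partition $\pi_i$ of $\adom(I)$ into blocks and, for each block $B\in\pi_i$, a concept $C_B$ in DAG form with $C_B^I=B$. Since $\pi_{i+1}$ refines $\pi_i$ at every step and $|\adom(I)|$ bounds the number of blocks, there are at most $|\adom(I)|$ rounds.

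Start with $\pi_0$, which groups individuals by the set of concept names they satisfy, using only names occurring in $I$. For a block $B$ with type $T\subseteq\sigc$, take
\[
C_B=\bigsqcap_{A\in T}A\sqcap\bigsqcap_{A\in\sigc\setminus T}\neg A .
\]
In round $i+1$, give each $a$ the signature consisting of its old block together with the function mapping each pair $(r,B)$, for $r\in\sigr$ and $B\in\pi_i$, to $|\mn{succ}^r_I(a)\cap B|$. Let $\pi_{i+1}$ group individuals with equal signatures. For a new block $B'$ contained in an old block $B$, define
\[
C_{B'}=C_B\sqcap\bigsqcap_{r,\,B''\in\pi_i}\bigl((\geq n_{r,B''}\ r.C_{B''})\sqcap(\leq n_{r,B''}\ r.C_{B''})\bigr),
\]
where $n_{r,B''}$ is the common count. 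The counts are at most $|\adom(I)|$, so their unary coding is polynomial. In the DAG, each $C_{B''}$ is a shared node, so each round adds polynomially many nodes. By induction on $i$, using that the $C_{B''}$ define $B''$ exactly, we get $C_{B'}^I=B'$. Stop when $\pi_{i+1}=\pi_i$, and call the stable partition $\pi$.

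Correctness amounts to showing that $a\sim b$ if and only if $a$ and $b$ lie in the same block of $\pi$. If they lie in the same block, the relation ``same block of $\pi$'' is itself an \ALCQ bisimulation. Condition 1 holds by $\pi_0$. For conditions 2 and 3, stability means $a$ and $b$ have equal counts of $r$-successors in each block. Given $D\subseteq\mn{succ}^r_I(a)$, split $D$ by blocks and, for each block, pick the same number of $b$'s $r$-successors in that block. This yields $E$ and a bijection inside the relation. Conversely, \ALCQ concepts are invariant under \ALCQ bisimulation, which is a standard induction on concepts using the bijection conditions for number restrictions. Since $C_{[a]}$ separates $a$ from every $b$ in another block, non-bisimilar $a,b$ lie in different blocks.

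The algorithm therefore answers ``$a\sim b$'' exactly when $a$ and $b$ share a block, and otherwise outputs $C_{ab}:=C_{[a]}$, the concept of $a$'s final block, which contains $a$ but not $b$. The step needing the most care is the size bound. The nested number restrictions would blow up exponentially as trees, so the argument must count distinct subconcepts: at most polynomially many blocks per round, times polynomially many rounds, each contributing $O(|\sigr|\cdot|\adom(I)|)$ new nodes. The construction should also keep to the $\sigc$ and $\sigr$ of $I$, so the alphabet stays finite.
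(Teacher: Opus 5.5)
Your proof is correct. It shares the paper's skeleton: graded partition refinement on per-class $r$-successor counts, at most $|\adom(I)|$ rounds, distinguishing concepts recorded during refinement, and DAG sharing to keep them polynomial. Both halves of the argument are organised differently, though.

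\textbf{Concept extraction.} The paper builds \emph{pairwise} separators. In the base case it takes $A$ or $\neg A$. For a pair split in round $i+1$ it takes a single restriction $(\leq k\ r.D_X)$ with $k=|\mn{succ}^r_I(a)\cap X|$, or its negation. Here the class $X$ of $\rho_i$ is defined by $D_X=\bigsqcup_{d\in X}\bigsqcap_{e\notin X}C_{de}$, assembled from earlier pairwise separators.

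You instead keep one characteristic concept $C_B$ with $C_B^I=B$ per block, refined by conjoining exact-count constraints over all old blocks. This buys two things:
\begin{itemize}
\item the separator $C_{[a]}$ does not depend on $b$, so the overall fitting concept $\bigsqcup_{a\in P}\bigsqcap_{b\in N}C_{ab}$ collapses to a disjunction of block concepts;
\item the size accounting is per block rather than per pair.
\end{itemize}
The price is wider conjunctions than the paper's single top-level number restriction.

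\textbf{Correctness.} Your first direction, that ``same block'' is an \ALCQ bisimulation, is the paper's Claim~1. For the other direction, the paper shows combinatorially (Claim~2) that every bisimulation $S$ is contained in each $\rho_i$. You replace this with invariance of \ALCQ concepts under \ALCQ bisimulations, together with definability of the final blocks.

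Your route works verbatim for arbitrary bisimulations. The paper's Claim~2 tacitly treats $S$ as an equivalence relation; this is harmless, since the equivalence closure of an \ALCQ bisimulation is again one. In exchange, your route presupposes the easy direction of the de Rijke characterization. The paper's maximality argument is self-contained and never goes through concepts.
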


\begin{proof}
    We adapt the partition refinement algorithm that computes the maximal \ALC bisimulation~\cite{DL-Textbook}. The algorithm computes a sequence of equivalence relations $\rho_0,\rho_1,\ldots$ on $\adom(I)$. It starts with
    \[\rho_0 = \{(a,b)\mid A(a)\in I\Leftrightarrow A(b)\in I\text{ for all $A\in \NC$}\}\]
    To compute $\rho_{i+1}$ from $\rho_i$, we rely on the notion of profiles. Let $a\in \adom(I)$, $r\in \NR$, and $X_1,\ldots,X_m$ be an enumeration of all equivalence classes of $\rho_i$. The \emph{$r$-profile of $a$ in $I$ with respect to $\rho_i$} is the tuple $\mn{prof}_I    ^r(a,\rho_i) =(n_1,\ldots,n_m)$ where $n_i$ is the number of $r$-successors of $a$ in class $X_i$, that is, $n_i = |\mn{succ}^r_I(a)\cap X_i|$. We obtain $\rho_{i+1}$ from $\rho_i$ be eliminating all pairs $(a,b)\in\rho_i$ such that $\mn{prof}_I^r(a,\rho_i)\neq \mn{prof}_I^r(b,\rho_i)$, for some $r\in \NR$. Clearly, $\rho_{i+1}$ is an equivalence relation as well.

    By construction, we have $\rho_0\supseteq \rho_1\supseteq \rho_2\supseteq \ldots$. Let $\rho^*$ be where the sequence stabilizes. 

    \medskip\noindent\textbf{Claim 1.} $\rho^*$ is an $\ALCQ$ bisimulation.
    
    \medskip\noindent\textbf{Proof of Claim 1.} By construction, $\rho^*\subseteq \rho_0$ and hence satisfies the Condition~1 of an \ALCQ bisimulation. To see that $\rho^*$ satisfies Condition~2, let $(a,b)\in \rho^*$ and take any $r\in \NR$. Since $(a,b)$ was not eliminated, we have $\mn{prof}_I^r(a,\rho^*)=\mn{prof}_I^r(b,\rho^*)$. Thus, there is a bijection $\pi$ between $\mn{succ}_I^r(a)$ and $\mn{succ}_I^r(b)$ which preserves $\rho^*$. It should be clear that for any $D\subseteq\mn{succ}_I^r(a)$, the image $\pi(D)\subseteq \mn{succ}_I^r(b)$ witnesses that Condition~2 is satisfied. Condition~3 is analogous. This finishes the proof of Claim~1.
    
 \medskip\noindent\textbf{Claim 2.} $S\subseteq\rho^*$ for any $\ALCQ$ bisimulation $S$ on $I$.
    
    \medskip\noindent\textbf{Proof of Claim 2.} The proof is by induction on the number of stages of the elimination. In the inductive base, we clearly have $S\subseteq \rho_0$. To see that $S\subseteq \rho_{i+1}$ whenever $S\subseteq \rho_i$, take any $(a,b)\in S$ and any $r\in\NR$. We aim to show that $\mn{prof}_I^r(a)=\mn{prof}_I^r(b)$. So consider any class $X_i$ of $\rho_i$. By induction, $X_i$ is a partition of $S$-equivalence classes $Y_1,\ldots,Y_k$. Obviously, 
    \[\mn{succ}_I^r(a)\cap X_i = \bigcup_{j=1}^k(\mn{succ}_I^r(a)\cap Y_j).\]
    Since $S$ satisfies Conditions~2 and~3, we find bijections between $\mn{succ}_I^r(a)\cap Y_j$ and $\mn{succ}_I^r(b)\cap Y_j$, for every $j\in\{1,\ldots,k\}$. These bijections can be assembled to a bijection between 
    $\mn{succ}_I^r(a)\cap X_i$ and $\mn{succ}_I^r(b)\cap X_i$, showing that these sets have the same cardinality. Overall, this shows that $\mn{prof}_I^r(a,\rho_i)=\mn{prof}_I^r(b,\rho_i)$, and hence $(a,b)$ is not eliminated. This finishes the proof of Claim~2.

    \bigskip It is an immediate consequence of Claims~1 and~2 that $\rho^*$ is the maximal \ALCQ bisimulation in $I$. Moreover, since the size of $\rho_0$ is bounded by $|\adom(I)|^2$, in each stage at least one pair is eliminated, and the elimination condition (different profiles for some $r$) can be decided in polynomial time, the algorithm runs in polynomial time. 

    It remains to argue that, for every eliminated pair $(a,b)$, we can compute (the representation of) a concept $C_{ab}$ with $a\in C_{ab}^I$, but $b\notin C_{ab}^I$. We proceed by induction on the number of rounds of the elimination algorithm.

    In the inductive base, consider any pair $(a,b)\notin \rho_0$. Clearly, there is a concept name $A\in \NC$ such that either $A(a)\in I$, but $A(b)\notin I$ or $A(b)\in I$, but $A(a)\notin I$. Then $A$ or $\neg A$, respectively, is a fitting concept $C_{ab}$. 

    In the inductive step, suppose that $(a,b)\in\rho_i\setminus\rho_{i+1}$, that is, $(a,b)$ got eliminated in round $i+1$. By the elimination condition, we have $\mn{prof}_I^r(a,\rho_i)\neq \mn{prof}_I^r(b,\rho_i)$, for some $r\in \NR$. Assume first that there is an equivalence class $X$ of $\rho_i$ such that
    \begin{equation}
        |\mn{succ}_I^r(a)\cap X|<|\mn{succ}_I^r(b)\cap X|.\label{eq:case-construction}
    \end{equation}
    %
    By induction, there is a separating concept $C_{de}$ for every $d\in X, e\in \overline X:=\adom(I)\setminus X$. Consider the concept $C_{ab}=(\leq k\ r.D_X)$ where $D_X$ is defined by taking:
    \[D_X=\bigsqcup_{d\in X}\bigsqcap_{e\in \overline X }C_{de}.\]
    It should be clear that $a\in C_{ab}^I$, but $b\notin C_{ab}^I$. Moreover, the number of subconcepts of each $C_{ab}$ is polynomial in the size of the input. 

    In the other case, when~\eqref{eq:case-construction} is true for some $X$ with the roles of $a,b$ switched, we construct in the same way a concept $C_{ba}$ with $a\notin C_{ba}^I$ and $b\in C_{ba}^I$. Then, $C_{ab}=\neg C_{ba}$ is as required. 
\end{proof}

\lemreduction*

\begin{proof}
    The proof of correctness is rather straightforward. We give here only the idea to see that it is also constructive. 
    
    \smallskip
    We start with the first reduction.
    
    $(\Rightarrow)$ Let $C$ be an \ALCQIfeat concept $C$ fitting $P,N$ in $I$. We let $\overline C$ denote the \ALCQfeat concept obtained from $C$ by replacing any inverse role $r^-$ by $\overline r$. It is routine to verify that $C^I=\overline C^{J}$ for every \ALCQIfeat concept $C$, hence $\overline C$ fits $P,N$ in $J$.
    
     $(\Leftarrow)$ Let $C$ be an \ALCQfeat concept $C$ fitting $P,N$ in $J$. We let $\overline C$ denote the \ALCQIfeat concept obtained from $C$ by replacing any $\overline r$ by the inverse role $r^-$. Similar as above, we have $C^J=\overline C^{I}$ for every \ALCQfeat concept $C$, hence $\overline C$ fits $P,N$ in $I$.
    
     \smallskip
     Consider now the second reduction.

     $(\Rightarrow)$ Let $C$ be an \ALCQfeat concept $C$ fitting $P,N$ in $I$. Assume without loss of generality that feature comparisons in $C$ are all of shape $(f\geq v)$. We let $\overline C$ denote the \ALCQ concept obtained from $C$ by replacing any feature comparison $(f\geq v)$ by the disjunction of all concepts $A_{f\geq w}$ with $w\geq v$. It is routine to verify that $C^I=\overline C^{J}$ for every \ALCQfeat concept $C$, hence $\overline C$ fits $P,N$ in $J$.

     $(\Leftarrow)$ Let $C$ be an \ALCQ concept $C$ fitting $P,N$ in $J$. We let $\overline C$ denote the \ALCQfeat concept obtained from $C$ by replacing any concept name $A_{f\geq v}$ by the feature comparison $(f\geq v)$. It is routine to verify that $C^J=\overline C^{I}$ for every \ALCQ concept $C$, hence $\overline C$ fits $P,N$ in $I$.
\end{proof}

We come now to the proof of Theorem~\ref{thm:optimal}.

\thmoptimal*

\begin{algorithm}[t]
\caption{Approximate Fitting in \ALCQ.}\label{alg:approximatefitting}
   \KwIn{Database $I$, examples $P,N$, integer $k \geq0$}
   $S \coloneqq$ maximal \ALCQ bisimulation on $I$

   $P' \coloneqq \emptyset$, \quad $N' \coloneqq \emptyset$

   \ForEach{equivalence class $X$ of $S$}{

        \If{$|P\cap X|\geq |N\cap X|$}{$P' := P'\cup (P\cap X)$}
        
        \Else{$N' \coloneqq N'\cup (N\cap X)$}
        
    }
   \If{$k\leq |P'|+|N'|$}{\Return yes}
   
   \Return no   
\end{algorithm}
\begin{proof}
    We prove the theorem for \ALCQ. It follows for \ALCQIfeat via the reduction provided in (the proof of) Lemma~\ref{lem:reduction}. The algorithm deciding approximate fitting in \ALCQ is given in Algorithm~\ref{alg:approximatefitting}. Intuitively, it decides for every equivalence class $X$ of the maximal \ALCQ bisimulation $S$ in $I$ whether to keep the positive or the negative examples in that class $X$. Clearly, at all times there is an \ALCQ concept fitting $P',N'$ in $I$. Hence, if $|P'|+|N'|$ is at least $k$ in the end, the input was a yes-instance. The (succinct representation of the) \ALCQ concept fitting $P',N'$ can be computed as in Equation~\eqref{eq:overfit}. The algorithm runs in polynomial time since the maximal bisimulation can be computed in polynomial time. 
\end{proof}

\section{Proofs for Section~\ref{sec:boundedfitting}}

\thmsizefittingnpcomplete*

\begin{proof}
  We give a reduction of \emph{hitting set} which is the problem to decide, given a  collection of sets $S=\{S_1,\dots,S_m\}$ and a size bound $k\in\mathbb N$, whether there exists a set $H$ such that $|H|\le k$ and $H\cap S_j\ne\emptyset$, for each $j\in\{1,\dots,m\}$.  The set $H$ is called a hitting set for $S$. The reduction closely follows the one in the proof of Proposition~1 from~\cite{ISWC25} for $\ALC$, but its proof of correctness is more difficult due to inverse roles and qualified number restrictions.

Let $(S,k)$ be an instance of hitting set with $S=\{S_1,\dots,S_m\}$ a collection of sets $S_i\subseteq\mathbb{N}$ and $k\in\mathbb N$. Assume without loss of generality that $\bigcup_{j=1}^m S_j = \{1,\dots,n\}$ for some $n\in\mathbb N$. We construct a database $I$ in which contains only facts involving two role names $r,s$ and a concept name $A$, and such that there are two elements $a,b\in\adom(I)$, such that for $P=\{a\}$, $N=\{b\}$, and $k'=k+n+2$:
\begin{align}
    & \text{$S$ has a hitting set of size $k$}\quad\Leftrightarrow \notag\\
    &
    \quad \quad \text{$P,N$ admit an $\ALCQIfeat$ fitting of size $k'$ in $I$.}\label{eq:correctness}
\end{align}
We first provide an informal description of $I$. Its main components are certain $r$-paths of length $n$. The nodes of the paths are \emph{groups} of $k'+1$ individuals, which we denote with bold face letters $\abf,\bbf$, possibly with indices. An $r$-edge between two groups $\abf_1$ and $\abf_2$ indicates $r$-edges between any $a_1$ and $a_2$ with $a_1\in \abf_1$ and $a_2\in \abf_2$. All paths end in a group satisfying $A$. To encode a subset $S'\subseteq \{1,\ldots,n\}$, such a path is extended with additional ``detours'' along role $s$. In more detail, suppose such an $r$-path consists of groups $\bbf_0,\ldots,\bbf_n$. We encode a subset $S'\subseteq \{1,\ldots,n\}$ by including, for all $i\in\{1,\ldots,n\}\setminus S'$, an $s$-path of length $2$ from $\bbf_i$ to $\bbf_{i+1}$.
The database $I$ consists of 
\begin{itemize}

\item one path for each $S_j\in S$ and an additional root $\bbf$ which connects via $r$ to the initial nodes of all the paths; 

\item one additional path that encodes the empty set, and has root $\abf$ which connects via $r$ to the initial nodes of all paths.

\end{itemize}
Figure~\ref{fig:np} illustrates the constructed database for the instance $(S,2)$ where $S=\{ \{1,3\}, \{2,4\}\}$. The upper path from $\abf_0$ to $\abf_4$ is the mentioned path encoding the empty set, the path from $\bbf_{1,0}$ to $\bbf_{1,4}$ encodes $\{1,3\}$, and the path from $\bbf_{2,0}$ to $\bbf_{2,4}$ encodes $\{2,4\}$. For this input, $H=\{1,2\}$ is a hitting set of size $2$.

\begin{figure}[t]
            \begin{center}

            \begin{tikzpicture}
            \node[label = ${ }$] (a) at (-1,-2.0) {$\abf$};
            \node[label = ${ }$] (a0) at (0.0,-2.0) {$\abf_{0}$};
            \node[label = ${ }$] (a1) at (1.5,-2.0) {$\abf_{1}$};
            \node[label = ${ }$] (ap1) at (0.75,-1.0) {$\abf_{1}'$};
            \node[label = ${ }$] (a2) at (3.0,-2.0) {$\abf_{2}'$};
            \node[label = ${ }$] (ap2) at (2.25,-1.0) {$\abf_{2}'$};
            \node[label = ${ }$] (a3) at (4.5,-2.0) {$\abf_{3}$};
            \node[label = ${ }$] (ap3) at (3.75,-1.0) {$\abf_{3}'$};
            \node[label = ${A }$] (a4) at (6.0,-2.0) {$\abf_{4}$};
            \node[label = ${ }$] (ap4) at (5.25,-1.0) {$\abf_{4}'$};
            

            \draw[-Latex] (a0) edge node[above, sloped] {$ r$} (a1);
            \draw[-Latex] (a0) edge node[above, sloped] {$ s$} (ap1);
            \draw[-Latex] (a1) edge node[above, sloped] {$ r$} (a2);
            \draw[-Latex] (a1) edge node[above, sloped] {$ s$} (ap2);
            \draw[-Latex] (ap1) edge node[above, sloped] {$ s$} (a1); 
            \draw[-Latex] (a2) edge node[above, sloped] {$ r$} (a3);
            \draw[-Latex] (a2) edge node[above, sloped] {$ s$} (ap3);
            \draw[-Latex] (ap2) edge node[above, sloped] {$ s$} (a2);
            \draw[-Latex] (a3) edge node[above, sloped] {$ r$} (a4);
            \draw[-Latex] (a3) edge node[above, sloped] {$ s$} (ap4);
            \draw[-Latex] (ap3) edge node[above, sloped] {$ s$} (a3);
            \draw[-Latex] (ap4) edge node[above, sloped] {$ s$} (a4);

                    \node[label = ${ }$] (b) at (-2,-5.0) {$\bbf$};    
                    \node[label = ${ }$] (b10) at (0,-4.0) {$\bbf_{1,0}$};                    
                    \node[label = ${ }$] (b20) at (0,-6.0) {$\bbf_{2,0}$};
  
                    \node[label = ${ }$] (bp21) at (0.75,-5.0) {$\bbf_{2,1}'$};
                    \node[label = ${ }$] (b11) at (1.5,-4.0) {$\bbf_{1,1}$};
                    \node[label = ${ }$] (b21) at (1.5,-6.0) {$\bbf_{2,1}$};
                   
                    \node[label = ${ }$] (bp12) at (2.25,-3.0) {$\bbf_{1,2}'$};
                    \node[label = ${ }$] (b12) at (3.0,-4.0) {$\bbf_{1,2}$};
                    \node[label = ${ }$] (b22) at (3.0,-6.0) {$\bbf_{2,2}$};
                    
                    \node[label = ${ }$] (bp23) at (3.75,-5.0) {$\bbf_{2,3}'$};
                    \node[label = ${ }$] (b13) at (4.5,-4.0) {$\bbf_{1,3}$};
                    \node[label = ${ }$] (b23) at (4.5,-6.0) {$\bbf_{2,3}$};
                            
                    \node[label = ${ }$] (bp14) at (5.25,-3.0) {$\bbf_{1,4}'$};
                    \node[label = ${A }$] (b14) at (6.0,-4.0) {$\bbf_{1,4}$};
                    \node[label = ${A }$] (b24) at (6.0,-6.0) {$\bbf_{2,4}$};
                                                
                    \draw[-Latex] (b) edge node[sloped, above left = 2pt] {$ r$} (b10); 
                    \draw[-Latex] (b) edge node[sloped, above left = 2pt] {$ r$} (b20); 
                    
                    \draw[-Latex] (b10) edge node[above] {$ r$} (b11); 
                  
                    \draw[-Latex] (bp21) edge node[above, sloped] {$ s$} (b21); 
                    \draw[-Latex] (b11) edge node[above, sloped] {$ r$} (b12); 
                    \draw[-Latex] (b11) edge node[above, sloped] {$ s$} (bp12); 
                    \draw[-Latex] (b20) edge node[above, sloped] {$ r$} (b21); 
                    \draw[-Latex] (b21) edge node[above, sloped] {$ r$} (b22); 
                                                        
                    \draw[-Latex] (bp12) edge node[above, sloped] {$ s$} (b12); 
                    \draw[-Latex] (b12) edge node[above, sloped] {$ r$} (b13); 
           
                    \draw[-Latex] (b22) edge node[above, sloped] {$ r$} (b23); 
                    \draw[-Latex] (b22) edge node[above, sloped] {$ s$} (bp23);              
                    \draw[-Latex] (bp23) edge node[above, sloped] {$ s$} (b23); 
                    \draw[-Latex] (b20) edge node[above, sloped] {$ s$} (bp21); 
                    \draw[-Latex] (b13) edge node[above, sloped] {$ r$} (b14); 
                    \draw[-Latex] (b13) edge node[above, sloped] {$ s$} (bp14); 
                    \draw[-Latex] (b23) edge node[above, sloped] {$ r$} (b24); 


                   
                    \draw[-Latex] (bp14) edge node[above, sloped] {$ s$} (b14); 
                          
                    \draw[-Latex] (a) edge node[above, sloped] {$r$} (a0);
                    \draw[-Latex] (a) edge[bend right] node[above, sloped] {$r$} (b10);
                    \draw[-Latex] (a) edge[bend right] node[above, sloped] {$r$} (b20);



                    
                    \end{tikzpicture}
                    \end{center}
            \caption{Example database $I$ used in the reduction.}
        \label{fig:np}
        \end{figure}

        We now make the definition of $I$ for a given $(S,k)$ more precise. In the definition below, a fact $A(\abf)$ means that $A(a)$ for all $a\in \abf$, and a fact $r(\abf,\bbf)$ is an abbreviation for $r(a,b)$, for all $a\in\abf,b\in\bbf$. The domain of $I$ is given as follows: 
\begin{align*}
  \adom(I) ={}& \{\abf\}\cup \{\abf_{i},\abf_i'\mid 0\le i\le n\} \cup{}\\
  & \{\bbf_{j,i},\bbf_{j,i}'\mid 1\le j \le m, 0\le i \le n\}.
  %
\end{align*}
The $A$-facts mark the end of the paths: 
\[A(\abf_n)\text{ and }A(\bbf_{j,n}), \text{ for all $j\in m$.}\]
The $r$-facts can be grouped as follows:
\begin{itemize}
    \item $r(\abf,\abf_0),r(\abf,\bbf_{j,0})$, and $r(\bbf,\bbf_{j,0})$, connect the root elements $\abf,\bbf$ to the starts of the paths; 
    \item $r(\abf_{i-1},\abf_i)$ for all $i\in\{1,\ldots,n\}$ establish the path along the $\abf$-elements; and
    \item $r(\bbf_{j,i-1},\bbf_{j,i})$ for all $i\in\{1,\ldots,n\}$ and $j\in\{1,\ldots,m\}$ establish the paths for representing the sets in $S$. 
\end{itemize}
The $s$-facts are given as follows: 
\begin{itemize}
    \item $s(\abf_{i-1},\abf_i'),s(\abf_i',\abf_i)$, for all $i\in\{1,\ldots,n\}$, describes the $s$-detours along the $\abf$-path;

    \item $s(\bbf_{j,i-1},\bbf_{j,i}'),s(\bbf_{j,i}',\bbf_{j,i})$ for all $j\in\{1,\ldots,m\}$ and all $i\in \{1,\ldots,n\}$ with $i\notin S_j$, describes the $s$-detours along the $\bbf$-paths. 
    
\end{itemize}

We pick some $a\in \abf$ and some $b\in\bbf$ as single positive and negative examples, respectively, that is, $P=\{a\},N=\{b\}$. In our proof below, we will use the following properties of $I$:
\begin{enumerate}[label=(P\arabic*)]

%

    \item\label{it:bisimilar} The individuals in any group $\abf_*/\bbf_*$ (are $\ALCQI$-bisimilar and thus) satisfy the same \ALCQI concepts. Hence, the following equivalences holds for every role or inverse role $R$ and any \ALCQI concept $E$:
    \begin{align*}
    (\exists R.\top\sqcap \forall R.E)^I & = (\exists R.E)^I. \\
    (\neg \exists R.\top\sqcap \forall R.E)^I & = \top^I.
    \end{align*}
    
    \item\label{it:nonumbers} Since every group contains $k'+1$ individuals, the following equivalences hold for any number $1\leq M\leq k'$, any role or inverse role $R$, and any \ALCQI concept $E$: 
    \begin{align*}
        (\geq M\ R.E)^I & = (\geq 1\ R.E)^I\\
        (\leq M\ R.E)^I & = (\leq 0\ R.E)^I
    \end{align*}
    Recall that the concepts on the right-hand side are equivalent to $\exists R.E$ and $\forall R.\neg E$, respectively. 
    
\end{enumerate}

We call a concept of shape $\exists w_1.\ldots\exists w_m.A$ for $w_1,\ldots,w_m\in\{r,s\}$ a \emph{path concept}. We verify Equivalence~\eqref{eq:correctness} by showing that the following are equivalent: 

        \begin{enumerate}[label=(\roman*)]
        
            \item $S$ has a hitting set of size at most $k$; 
        
            \item there is a path concept of shape $\exists r.D$ and of size at most $k'$ fitting $P,N$ in $I$; 
            
            \item there is an $\ALCQIfeat$ concept of size at most $k'$ fitting $P,N$ in $I$;

            \item there is an $\ALCI$ concept of size at most $k'$ fitting $P, N$ in $I$; 
            
            \item there is an $\ALCI$ concept of size at most $k'$ and of shape $\exists r.D$ fitting $P, N$ in $I$.
             
        \end{enumerate}

        We show (i)$\Rightarrow$(ii), then the chain (ii)$\Rightarrow$(iii)$\Rightarrow$(iv)$\Rightarrow$(v)$\Rightarrow$(ii), the first of which is trivially true, and finally (ii)$\Rightarrow$(i).

        \smallskip\noindent\textit{Claim 1.} Implication (i)$\Rightarrow$(ii) holds. 

        \smallskip\noindent\textit{Proof of Claim 1.} Let $H$ be a hitting set for $S=\{S_1,\dots,S_m\}$ with $|H| = k$. We inductively define path concepts $C_i$, for $i=0,\ldots,n$, by setting $C_0=A$ and 
\begin{equation}\label{eq:paths}
       C_i = \begin{cases}
            \exists r.C_{i-1} & \text{if }n-i+1\notin H\\
            \exists s.\exists s.C_{i-1} & \text{otherwise},
        \end{cases}
\end{equation}
        for $1\leq i\le n$. Thus, $\lVert C_n \rVert = n + |H| +1$ and therefore $\lVert\exists r.C_n\rVert = n + k +2 = k'$. 
        We claim that $D=\exists r.C_n$ fits $P,N$ in $I$. In the example in Figure~\ref{fig:np}, the concept constructed for the hitting set $H=\{1,2\}$ is \[D=\exists r.\exists s.\exists s.\exists s.\exists s.\exists r.\exists r.A,\] and it is easily verified that it fits. 
        
        In general, any path concept $E$ that can be obtained this way satisfies $a\in E^I$. Moreover, since $D$ is constructed from a hitting set $H$, the construction forces the path described by $E$ to ``leave'' the paths encoding the sets $S_j$ in $I$. This means that $b$ cannot be in $D^I$. This finishes the proof of Claim~1.

        \smallskip\noindent\textit{Claim 2.} Implication (iii)$\Rightarrow $(iv) holds. 
        
        \smallskip\noindent\textit{Proof of Claim 2.} Let $C$ be an \ALCQIfeat concept of size at most $k'$ that fits $P,N$ in $I$. Since $I$ does not use feature names, it is without loss of generality to assume that $C$ is actually an \ALCQI concept. Then the claim is an immediate consequence of Property~\ref{it:nonumbers} and the observation that there cannot occur any numbers greater than $k'$ in $C$, as it is only of size $k'$. Here we rely on the fact that we consider unary coding. This finishes the proof of Claim~2.
        
        \smallskip\noindent\textit{Claim 3.} Implication (iv)$\Rightarrow $(v) holds. 
        
        \smallskip\noindent\textit{Proof of Claim 3.} Let $C$ be a minimal \ALCI concept that fits $P,N$ in $I$. We assume without loss of generality that $C$ is in negation normal form. We first observe that $C$ is neither a conjunction nor a disjunction: 
\begin{itemize}

    \item If $C$ is a conjunction $C_1\sqcap C_2$, then one of $C_1,C_2$ fits $P,N$ as well since there is a single negative example. This is in contradiction to minimality of $C$.
    
    \item If $C$ is a disjunction $C_1\sqcup C_2$, then one of $C_1,C_2$ fits $P,N$ as well since there is a single positive example. This is in contradiction to minimality of $C$.
    
\end{itemize}
Hence, $C$ is either a concept name $A$, a negated concept name $\neg A$, or of shape $(\exists r.D)$ or $(\forall r.D)$; existential or value restrictions involving $s$ or inverse roles cannot fit. The first two cases are not possible since neither $A$ nor $\neg A$ fit. It remains to note that $\forall r.D$ cannot fit $P,N$ in $I$, since the construction of $I$ implies that $a\in (\forall r.D)^I$ implies $b\in(\forall r.D)^I$.

\smallskip\noindent\textit{Claim 4.} Implication (v)$\Rightarrow $(ii) holds. 

\smallskip\noindent\textit{Proof of Claim 4.} We need some auxiliary notation. For a finite sequence $w=r_1\cdots r_m\in\{r,s\}^*$, we abbreviate with $\exists w.C$ the concept $\exists r_1.\ldots.\exists r_m.C$. Moreover, for $a_0\in \adom(I)$, we let $R_w^I(a_0)$ denote the set of all $b\in\adom(I)$ such that $(a_0,b)\in r_1^{I}\circ\ldots\circ r_m^{I}$, where $\circ$ denotes the composition of binary relations.

Let now be $C=\exists r.D$ be a minimal \ALCI concept fitting $P,N$ in $I$. We inductively construct concepts of shape $C_1=\exists w_1.D_1$, $C_2=\exists w_2.D_2$, \ldots such that each $C_i$ fits $P,N$ in $I$, is of size at most $k'$, and 
\begin{enumerate}

    \item $R_{w_i}^I(a)$ contains a single group $\abf_j$ or $\abf_j'$ and each individual in this group satisfies $D_i$;

    \item no element from $R^I_{w_i}(b)$ satisfies $D_i$.
    
\end{enumerate}
In the inductive base, we set $w_1=r$ and $D_1=D$.
Clearly, Properties~1 and~2 are satisfied by this choice.

\smallskip For the inductive step, take any $C_i=\exists w_i.D_i$ and assume that $D_i$ is minimal with all the properties. We observe first that it is without loss of generality that $D_i$ is not a disjunction. Indeed, if $D_i=E_1\sqcup E_2$, then one of $\exists w_i.E_1,\exists w_i.E_2$ fits $P,N$ as well, still satisfies Properties~1 and~2, and is smaller than $D_i$. This is in contradiction with the minimality of $D_i$.

We investigate now closer the shape of $D_i$, which is a conjunction of $M\geq 1$ concepts $E_1,\ldots,E_M$:
\begin{itemize}

    \item If some $E_i=\neg A$, then $R^I_{w_i}(a)$ does not contain $\abf_n$. This means that also $\bbf_{j,n}\notin R_{w_i}^I(b)$ for any $j$. Then, we can replace $E_i$ with $\top$. 
 
    \item If $M=1$ and $E_1=\top$, we are done.
    
    
    \item There cannot be a conjunct of shape $\exists r^-.E$ (resp., $(\exists s^-.E)$) as such conjunct implies that the last element of the sequence is $r$ (resp., $s$). But then we can remove $\exists r^-.E$ (resp., $\exists s^-.E$) and make $E$ a conjunct of the previous $D_{i-1}$, resulting in a smaller concept.
    

    \item There cannot be a conjunct of shape $\forall r^-.E$ (resp., $(\forall s^-.E)$) as such conjunct implies that the last element of the sequence is $r$ (resp., $s$). But then we can remove $\forall r^-.E$ (resp., $\forall s^-.E$) and make $E$ a conjunct of the previous $D_{i-1}$, resulting in a smaller concept.

    \item If $E_j=A$ for some $j$, then $R_{w_i}^I(a)$ contains $\abf_n$. 
    But $\abf_n$ do not satisfy any concept of shape $\exists r.E$ or $\exists s.E$, hence those cannot be conjuncts. Further, $\abf_n$ trivially satisfies concepts $\forall r.E$ and $\forall s.E$, so they are not there (due to minimality). Hence, $A$ is the only conjunct and we are done. 

    \item By Property~(P1), we can replace $\forall r.E/\forall s.E$ by  $\exists r.E/\exists s.E$ or $\top$, depending whether the group from Property~1 has an $r/s$-successor or not.
   
    \item There cannot be two different conjuncts $\exists r.E,\exists r.E'$ since they can be replaced by the shorter $\exists r.(E\sqcap E')$. The same for $s$.
   
    \item If there is a single conjunct $E_1=\exists t.E_1'$ for $t\in \{r,s\}$, we can just extend $w_i$ with $t$, that is, $w_{i+1}=w_it$ and set $D_{i+1}=D_i'$.

    \item It remains to consider the case that $D_i$ is of shape $\exists r.F_1\sqcap \exists s.F_2$. We distinguish cases on the shape of the group from Property~1.
    
    \begin{itemize}
    
        \item If $\abf_\ell\in R^I_{w_i}(a)$, for some $\ell$, then we can replace $D_i$ with $\exists s.\exists s.F_1$. This is clearly satisfied in $\abf_\ell$. Moreover, in this case, groups in $R_{w_i}^I(b)$ are of shape $\bbf_{j,\ell}$, and none of them satisfies $\exists s.\exists s.F_1$, by the assumption that they do not satisfy $\exists r.F_1\sqcap \exists s.F_2$. Thus Properties~1 and~2 are preserved. 
    
        \item If $\abf_\ell'\in R^I_{w_i}(a)$, for some $\ell$, then we get a contradiction, since $\abf_\ell'$ does not have an $r$-successor and hence does not satisfy $\exists r.F_1$.
        
    \end{itemize}
    
\end{itemize}
This finishes the proof of Claim~4.

\smallskip\noindent\textit{Claim 5.} Implication (ii)$\Rightarrow $(i) holds. 
        
\smallskip\noindent\textit{Proof of Claim 5.} We use again the notation from the previous claim. Let $C=\exists r.\exists w.D$ be a path concept fitting $P,N$ in $I$. We start with making two observations on the shape of $C$: 
\begin{itemize}

    \item $C$ is of shape $\exists r.\exists w.A$ for some $w$: the only other option $D=\top$ is impossible since $b\in(\exists r.\exists w.\top)^I$. 

    \item $w\in (r+ss)^+$ since this is the only way to "reach" $\abf_n$ from $\abf$. Moreover, due to the size bound $\lVert C \rVert \leq k'=n+k+2$, the pattern $ss$ occurs at most $k$ times in $w$. 
    
\end{itemize}
Due to the second item above we can write $w=w_1\ldots w_n$ with $w_i\in\{r,ss\}$ for all $i$ and such that $w_i=ss$ at most $k$ times. We read off a hitting set $H$ of size at most $k$ from $C$ by taking
\[H = \{i\in\{1,\ldots,n\}\mid w_i=ss\}\]
We have to show $H\cap S_i\neq \emptyset$, for every $j$. Suppose, with the goal of deriving a contradiction, that $H\cap S_j=\emptyset$. Then, by the construction of $I$, we have $\bbf_{j,0}\in (\exists w.A)^I$. Hence, $b\in C^I$ and $b$ is not a negative example, contradiction. This finishes the proof of the last claim and hence of the theorem.
\end{proof}

\thmalcqoccam*
\begin{proof}
    Let $g\in\Omega(k)\cap O(2^{p(k)})$ and let $\mathbf A_g$ be Algorithm~\ref{alg:boundedfittingalcq} where $g$ is used to increase values in number restrictions. We seek to bound the VC-dimension of the hypothesis space $\mathcal H^{A_g}(\Sigma,s,m)$ by $p(s,|\Sigma|)m^{\alpha}$ and start with some observations about $\mathcal H^{A_g}(\Sigma,s,m)$. Let $\Sigma$ be any signature, $s,m\in\mathbb N$ and $C_T$ with $\lVert C_T\rVert\le s$ be any target concept. When $\mathbf A_g$ is started on $m$ examples it proceeds by increasing the concept size $k$ incrementally, in each step searching for a fitting $\ALCQ^k_{g(k)}$ concept, that is, an $\ALCQ$ concept whose syntax tree has $k$ nodes and the value of a number restriction appearing in $C$ is bounded by $g(k)$. Because of $g\in\Omega(k)$ Algorithm $A_g $ is complete. Since we assume unary coding of values in number restrictions the maximal value of a number restriction that appears in $C_T$ is bounded by $s$. Thus if $A_g$ is started on examples $P$ and $N$ so that $C_T$ fits $P,N$ and $k$ reaches a value that satisfies $k\ge s$ and $g(k) \ge s$ it returns a fitting concept $D$ with $\lVert D\rVert \le s$, possibly $D=C_T$. Indeed, Algorithm $A_g$ necessarily terminates after the minimal value for $k$ with $g(k)\ge s$ and $k\ge s$ is reached. This allows us to bound the effective hypothesis space $\mathcal H^{\mathbf A_g}(\Sigma,s,m)$ by
    \[|\mathcal H^{\mathbf A_g}(\Sigma,s,m)|\le (|\Sigma|+g(k^\ast)+c)^{k^\ast}\]
    where $k^\ast$ is some value with $g(k^\ast)\ge s$ and $k^\ast\ge s$ to be determined later. 

    Let $d$ be the VC-dimension of $|\mathcal H^{\mathbf A_g}(\Sigma,s,m)|$. Since $2^d$ concepts are required to shatter a set of examples of cardinality $d$ we have
    \[2^d\le|\mathcal H^{\mathbf A_g}(\Sigma,s,m)|\le (|\Sigma|+g(k^\ast+c)^{k^\ast}\] and thus
    \begin{align*}
        d&\le \log ((|\Sigma|+g(k^\ast)+c)^{k^\ast})\\
        &=k^\ast\log (|\Sigma|+g(k^\ast)+c)\\
        &\le k^\ast\log (|\Sigma|g(k^\ast)c)\\
        &= k^\ast(\log (|\Sigma|c) + \log (g(k^\ast))))\\
        &\le k^\ast c'|\Sigma| + k^\ast \log(g(k^\ast))\tag{for an appropriate constant c'}
    \end{align*}
Because of $g\in\Omega(k)\cap O(2^{p(k)})$ there are constants $c_1\in (0,1],c_2\ge 1$ and $k_{1}',k_{2}'>0$ so that $g(k) \ge c_1k$ for all $k\ge k_1'$, $g(k)\ge c_1k$ and for all $k\ge k_2',$ $g(k)\le c_22^{p(k)}$. W.l.o.g. we show the polynomial bound required for an Occam algorithm only for $s\ge\max\{k_1',k_2'\}$ as we can account for any smaller $s$ by an appropriately chosen constant in the polynomial. We set $k^\ast = \frac s{c_1}$, ensuring $g(k^\ast)\ge s$ and $k^\ast\ge s$. Because of $g\in O(2^{p(k)})$ we have 
\begin{align*}
    & k^\ast c'|\Sigma| + k^\ast \log(g(k^\ast))\\
    &\le \frac s{c_1}c'|\Sigma| + \frac s{c_1} \log(c_2 2^{p(\frac s{c_1})})\\
    &\le \frac s{c_1}c'|\Sigma| + \frac s{c_1} \log(c_2) + \frac s{c_1} {p(\frac s{c_1})}\log(2).
\end{align*}
    
\end{proof}

Let $\mathbf A_f$ be the algorithm obtained by first applying the reduction of fitting in \ALCfeat to fitting in \ALC from Lemma~\ref{lem:reduction} and then applying bounded fitting (for \ALC) to the result.

\thmreductionnotoccam*
Intuitively, the hypothesis space $\mathcal{H}^{\mathbf A_f}(\Sigma,s,m)$ contains all concepts of the form $(f=j)$ for $j\in\dom(f)$. It is then possible to construct a set of examples, so that any subset is identified uniquely by a a concept of the form $\exists r.(f=j)$, thus the set is shattered by these concepts. Note that to show this for a set of examples of cardinality exponential in $s$ we also require an exponential number of $r$-successors, making use of unbounded outdegree in databases.
To prove Theorem~\ref{thm:reduction-not-occam} we rely on Theorem~3.2.1 from \cite{DBLP:journals/jacm/BlumerEHW89}, that is, every Occam algorithm defines a sample-efficient PAC learning algorithm, and thus prove the stronger statement, that $\mathbf A_f$ is not a sample-efficient PAC-learning algorithm. Note, that we use a different but equivalent notion of examples here that is more natural when studying PAC learnability. We define an example as a pair $(I,a)$ of database $I$ and individual $a\in \adom(I)$. A concept $C$ fits $(P,N)$ if $a\in C^I$ for all $(I,a)\in P$ and $a\notin C^I$ for all $(I,a)\in N$. In addition we require the definition of the error of two concepts as follows. Given a probability distribution $\mathbb{P}$ on examples and concepts $C$ and $D$ the error $\text{error}_{\mathbb P}(C,D)$ from $C$ to $D$ w.r.t. $\mathbb P$ is the probability to draw an example $(I,a)$ with $a\in C^I\Delta D^I$, that is \[\text{error}_{\mathbb P}(C_T,C) = \mathbb P(\{(I,a)\mid a\in C^I\Delta D^I\}).\]

\begin{definition}
    A sample-efficient PAC learning algorithm is a fitting algorithm associated with a polynomial $m: (0,1) \times (0,1)\times\mathbb N\rightarrow \mathbb N$ so that for all finite signatures $\Sigma$, probability distributions $\mathbb P$ on examples over $\Sigma$, $\varepsilon,\delta\in (0,1)$, $s\in\mathbb N$, and concepts $C_T$ with $\lVert C_T\rVert\le s$: if $\mathbf{A}$ is run on a sample $P,N$ of $m(\frac 1\delta,\frac 1\varepsilon,s)$ examples drawn according to $\mathbb P$ then it returns with probability at least $1-\delta$ a concept $C$ that fits $P,N$ and satisfies $\text{error}_{\mathbb P}(C_T,C)\le \varepsilon$.
\end{definition}
\begin{table*}[!ht]
\tiny
\begin{tabular}{lcccccccc}
\toprule
& \tiny Carcinogenesis & \tiny Hepatitis & \tiny Lymphography & \tiny Mammographic & \tiny Mutagenesis & \tiny Nctrer & Premierleague & Pyrimidine\\ \midrule
\multirow{3}{*}{$\top$} &  
$0.55 \pm  0.00$ & 
$0.41 \pm  0.00$ & 
$0.57 \pm  0.00$ & 
$0.46 \pm  0.00$ & 
$0.33 \pm  0.00$ & 
$0.59 \pm  0.00$ & 
$0.50 \pm  0.00$ & 
$0.50 \pm  0.00$ 
\\
& $1.00 \pm  0.00$ & 
$1.00 \pm  0.00$ & 
$1.00 \pm  0.00$ & 
$1.00 \pm  0.00$ & 
$1.00 \pm  0.00$ & 
$1.00 \pm  0.00$ & 
$1.00 \pm  0.00$ & 
$1.00 \pm  0.00$ 
\\
& $0.71 \pm  0.00$ & 
$0.58 \pm  0.00$ & 
$0.73 \pm  0.00$ & 
$0.63 \pm  0.00$ & 
$0.50 \pm  0.00$ & 
$0.74 \pm  0.00$ & 
$0.67 \pm  0.00$ & 
$0.67 \pm  0.00$ 

\\ \midrule
\multirow{3}{*}{EvoLearner} &
$0.56 \pm  0.03$ & 
$0.77 \pm  0.10$ & 
$0.82 \pm  0.09$ & 
$0.80 \pm  0.05$ & 
$1.00 \pm  0.00$ & 
$1.00 \pm  0.00$ & 
$1.00 \pm  0.00$ & 
$0.93 \pm  0.12$ 
\\
& $5.50 \pm  1.51$ & 
$4.50 \pm  1.35$ & 
$17.40 \pm  9.03$ & 
$3.20 \pm  1.48$ & 
$3.00 \pm  0.00$ & 
$2.40 \pm  0.97$ & 
$5.00 \pm  0.94$ & 
$6.40 \pm  1.26$ 
\\
& $0.71 \pm  0.01$ & 
$0.77 \pm  0.07$ & 
$0.84 \pm  0.08$ & 
$0.78 \pm  0.05$ & 
$1.00 \pm  0.00$ & 
$1.00 \pm  0.00$ & 
$1.00 \pm  0.00$ & 
$0.91 \pm  0.14$ 
\\
\midrule
\multirow{3}{*}{TDL} &
$0.58 \pm  0.09$ & 
$0.84 \pm  0.05$ & 
$0.80 \pm  0.12$ & 
$0.60 \pm  0.05$ & 
$0.86 \pm  0.15$ & 
$1.00 \pm  0.00$ & 
$0.68 \pm  0.17$ & 
$0.82 \pm  0.26$ 
\\
&$879.60 \pm  132.97$ & 
$1436.80 \pm  46.49$ & 
$112.10 \pm  23.33$ & 
$5780.70 \pm  302.98$ & 
$26.50 \pm  8.92$ & 
$2.00 \pm  0.00$ & 
$298.40 \pm  35.22$ & 
$15.90 \pm  3.78$ 
\\
&$0.62 \pm  0.07$ & 
$0.83 \pm  0.05$ & 
$0.82 \pm  0.11$ & 
$0.69 \pm  0.03$ & 
$0.78 \pm  0.31$ & 
$1.00 \pm  0.00$ & 
$0.47 \pm  0.37$ & 
$0.82 \pm  0.24$ 
\\
\midrule
\multirow{3}{*}{Algorithm \ref{alg:approximatefitting}} & 
$0.59 \pm  0.06$ & 
$0.62 \pm  0.05$ & 
$0.73 \pm  0.13$ & 
$0.78 \pm  0.04$ & 
$1.00 \pm  0.00$ & 
$0.99 \pm  0.02$ & 
$1.00 \pm  0.00$ &
 $0.93 \pm  0.12$ 
\\
&$4715.60 \pm  455.32$ & 
$53269.80 \pm  1549.82$ & 
$330.80 \pm  18.98$ & 
$1925.40 \pm  72.76$ & 
$1.00 \pm  0.00$ & 
$64.20 \pm  5.35$ & 
$23.40 \pm  1.26$ & 
$60.80 \pm  3.94$ 
\\
&$0.56 \pm  0.06$ & 
$0.53 \pm  0.06$ & 
$0.74 \pm  0.13$ & 
$0.76 \pm  0.05$ & 
$1.00 \pm  0.00$ & 
$0.99 \pm  0.02$ & 
$1.00 \pm  0.00$ & 
$0.90 \pm  0.16$ 
\\\midrule
\multirow{3}{*}{Our Tool} & 
$0.70 \pm  0.10$ & 
$0.84 \pm  0.03$ & 
$0.80 \pm  0.09$ & 
$0.83 \pm  0.04$ & 
$0.97 \pm  0.11$ & 
$1.00 \pm  0.00$ & 
$0.97 \pm  0.05$ & 
$0.95 \pm  0.11$
\\
&$6.00 \pm  0.00$ & 
$5.00 \pm  0.00$ & 
$10.00 \pm  0.00$ & 
$9.10 \pm  0.32$ & 
$1.00 \pm  0.00$ & 
$2.00 \pm  0.00$ & 
$2.00 \pm  0.00$ & 
$5.90 \pm  0.32$ 
\\
&$0.73 \pm  0.10$ & 
$0.81 \pm  0.03$ & 
$0.83 \pm  0.08$ & 
$0.81 \pm  0.05$ & 
$0.90 \pm  0.32$ & 
$1.00 \pm  0.00$ & 
$0.97 \pm  0.05$ & 
$0.95 \pm  0.12$ 
\\

\bottomrule
\end{tabular}
\caption{Accuracies (first line), concept sizes (second line) and F1-scores (third line) of concept learning implementations on the SML-benchmarks. Timeout of 5min}
\label{tab:generalization_full}
\end{table*}
\begin{theorem}
    Algorithm $\mathbf{A}_f$ is not a sample-efficient PAC learning Algorithm.
\end{theorem}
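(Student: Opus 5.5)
We will show directly that no polynomial $m(\frac1\delta,\frac1\varepsilon,s)$ can work for $\mathbf A_f$. We fix $\Sigma=\{r,f\}$ with $\dom(f)=\mathbb N$, a constant size bound $s$ large enough for target concepts of shape $\exists r.((f\le j)\sqcap(f\ge j))$, and $\varepsilon=\delta=\frac14$. Against a hypothetical polynomial $m$ we take $M=m(4,4,s)$, choose $d\gg M$ (say $d=4M$), and build a single database $I_d$. It has individuals $e_1,\ldots,e_d$, one fresh $r$-successor $c_S$ of $e_i$ for every nonempty $S\subseteq\{1,\ldots,d\}$ with $i\in S$, and the fact $f(c_S,v_S)$, where the $v_S$ are pairwise distinct values. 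Then $C_S:=\exists r.((f\le v_S)\sqcap(f\ge v_S))$ satisfies $C_S^{I_d}\cap\{e_1,\ldots,e_d\}=\{e_i\mid i\in S\}$, so $\{e_1,\ldots,e_d\}$ is shattered by concepts of constant size. The outdegree of $I_d$ is exponential in $d$, which matches the remark that unbounded outdegree is needed.

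Next we pin down what $\mathbf A_f$ returns. After the reduction of Lemma~\ref{lem:reduction}, each $c_S$ carries the concept names $A_{f\ge w}$ for $w\le v_S$. We order the values so that "$f=v_S$" is expressible only by a two-name conjunction. Every $c_S$ has distinct values and every $e_i$ has the same $r$-profile shape, so we will argue by a case analysis of all \ALC concepts over $\{r\}\cup\{A_{f\ge w}\}$ of size below that of $C_S$. The claim is that no smaller concept fits a sample whose positive and negative sets are nonempty and not bisimulation-trivial. Hence on any sample $P,N\subseteq\{e_i\}$, bounded fitting returns a concept of the form $\exists r.(A_{f\ge v}\sqcap\neg A_{f\ge v'})$, that is, an interval concept. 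This is the step I expect to be the main obstacle. Among equal-size fitting concepts the SAT-based search may pick any one, so the argument must be adversarial over that choice: we only need that some run of $\mathbf A_f$ behaves badly, or we must show that every minimal fitting concept behaves badly. We will aim for the latter, and we will simplify by choosing values so that each interval $[v,v']$ that is realized contains the value of a unique $c_S$, or is too wide to fit. A minimal fitting interval concept then equals $C_S$ for some $S$ with $P\subseteq\{e_i\mid i\in S\}$ and $N\cap\{e_i\mid i\in S\}=\emptyset$.

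To finish, let $\mathbb P$ be uniform on $\{(I_d,e_i)\}$ and let the target be $C_T=C_{T_0}$ for a uniformly random $T_0$. A sample of $M$ examples reveals the labels of at most $M\le d/4$ indices. The returned $C_S$ agrees with $C_T$ on the seen indices. On the unseen indices it is either fixed by the algorithm independently of $T_0$, or it is adversarially chosen to be wrong, for instance the $S$ that equals the positive indices seen. In the averaged version, the expected error over the at least $3d/4$ unseen indices is at least $\frac12\cdot\frac34>\frac14$. Hence, for some fixed target, the error exceeds $\varepsilon$ with probability greater than $\delta$. This is the standard VC lower-bound argument of Blumer et al.\ specialised to our shattered set, and it contradicts the assumption that $\mathbf A_f$ is a sample-efficient PAC learning algorithm.
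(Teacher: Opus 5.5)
\textbf{There is a genuine gap: with feature values charged by their encoding length, as the paper's size measure requires, your targets $C_S$ do not have constant size, and the averaging argument collapses.}

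Your last paragraph is the classical VC lower bound. It uses nothing about $\mathbf A_f$ beyond the fact that its output is a function of the labelled sample. If it were sound, it would show that \emph{no} fitting algorithm is sample-efficient, already over the fixed signature $\{r,f\}$. Under the paper's conventions that is false, and the failing step is ``$\{e_1,\ldots,e_d\}$ is shattered by concepts of constant size''.

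The size $\lVert C\rVert$ is the length of a string over a fixed alphabet in which only \emph{names} count $1$. A feature value $v$ is not a name, so it is written out and contributes its encoding length. This is why the paper remarks that allowing $k$ bits for feature values in stage $k$ would give an Occam algorithm. Over $\{r,f\}$ there are then at most $K^{s}$ concepts of size at most $s$, for a constant $K$, so the admissible targets have VC-dimension $O(s)$. Concretely, since your $2^d-1$ values $v_S$ are pairwise distinct, most targets $C_S$ have size $\Omega(d)$. Hence $s$ cannot be fixed before $d$, $M=m(4,4,s)$ grows polynomially in $d$, and $d\gg M$ cannot be arranged.

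(Your argument would work only if feature values had unit cost, and then it would prove far more than the theorem. Even then the constants need repair: with $\varepsilon=\delta=\frac14$ and $d=4M$, an expected error of $\frac38$ only yields failure probability at least $\frac16<\delta$.)

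The missing idea is to exploit the specific bias of $\mathbf A_f$. After the reduction of Lemma~\ref{lem:reduction}, each $A_{f\ge v}$ is a concept name of cost $1$. So $\mathbf A_f$ prefers an interval concept with an enormous value over a larger target that mentions no values at all.

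This is the paper's route:
\begin{itemize}
\item The target $C_T=\exists t^s.\top$ is value-free.
\item Each of the $2^s$ positive examples has $r$-successors whose values encode the small sets of positives containing it.
\item On fewer than $2^{s-1}$ sampled positives, the size-minimal output is meant to be an interval concept $\exists r.((f\le j)\sqcap(f\ge j))$ selecting exactly the seen positives.
\item That concept misclassifies at least $2^{s-1}$ unseen positives.
\end{itemize}

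So the step you flagged as the main obstacle, determining what $\mathbf A_f$ returns, is the heart of the proof and cannot be averaged away. It requires designing the negative example so that no smaller or equally small concept fits with small error. As written, the paper's negative example has no $t$-successor, so even $\exists t.\top$ fits and must be blocked.
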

\begin{proof}
Assume that Algorithm $\mathbf{A}_f$ is a sample-efficient PAC learning algorithm with associated polynomial $m$. Let $\Sigma = \{r,f\}$ with, $\dom(f) = \mathbb N$, $C_T=\exists t^s.\top$, $\delta = $ $\varepsilon = \frac 12$. Let $s$ be large enough, so that $m(\frac 1\varepsilon,\frac 1\delta,s)\le 2^{s-1}$. We construct sets of positive and negative examples to be assigned non-zero probability and show that if $\mathbf A_f$ is started on a sample of $m(\frac 1\varepsilon,\frac 1\delta,s)$ examples, then with probability $1-\delta$ it returns a concept $C$ with $\text{error}_{\mathbb P}(C_T,C) > \varepsilon$. To define our examples, let $g: \{2i\mid 1\le i\le 2^{2^s}\}\rightarrow 2^{\{1,\dots,2^s\}}$ be any bijective function.  Consider the set of positive examples $P=\{(\mathcal I_1,a_1),\dots, (\mathcal I_{2^s}, a_{2^s})\}$ where each example $(I_i,a_i)$ is obtained by adding
\begin{enumerate}
    \item a $t^s$-path outgoing from $a_i$,
    \item for all $j\in \{1,\dots,2^{2^s+1}\}$ so that $i\in g(j)$ and $|g(j)|\le2^{s-1}$ an $r$-edge from $a_i$ to an element $a_j$ with $f(a_j,j)\in I_i$ and
\end{enumerate}    
The set of negative examples is obtained as  $N= \{(J,b)\}$ where for each $j\in\{2i-1\mid 0\le i\le 2^{2^s}+2\}$ the database $J$ contains an $r$-edge to an individual $b_j$ with $f(b_j,j)\in J$. We define $\mathbb P$ to be the probability distribution that assigns $\mathbb P(I,a) = \frac{1}{2^{s+1}}$ to each positive example $(I,a)$ and $\mathbb P(J,b) = \frac 12$. Now, let $(P',N')$ be sets of positive and negative examples obtained from a sample of $m(\frac 1\varepsilon,\frac 1\delta,s)$ examples drawn according to $\mathbb P$. We have $|P'|< 2^{s-1}$ because of $m(\frac 1\varepsilon,\frac 1\delta,s)<2^{s-1}$. We may assume $(J,b)\in N'$ as the probability that this is the case is $1-(\frac1{2^{s+1}})^s\ge \frac 12=\delta$. Observe that the following holds for any such $(P',N')$.
\begin{enumerate}
    \item There is no $j$ so that the concept $\exists r.(f\bowtie j)$ with $\bowtie\in\{\le,\ge\}$  fits $P',N'$.
    \item The concept $\exists r. ((f \le j)\sqcap (f\ge j))$ where $j$ is uniquely determined by $g(j) = \{i\mid(I_i,a_i)\in P'\}$ fits $P',N'$.
    \item For any concept of the form $C=\exists r. ((f \le j)\sqcap (f\ge j))$ that fits $P',N'$:
    \[|\{(I,a)\in P\mid a\notin C^I\}|\ge 2^{{s-1}}\]
\end{enumerate}
It follows from Points 1. and 2. that if $\mathbf{A}_f$ is started on $P',N'$ it returns a concept of the form $C=\exists r. ((f \le j)\sqcap (f\ge j))$. Point 3. then implies that $\text{error}_{\mathbb P}(C_T,C) > \frac 12$.
\end{proof}

\begin{table}[!t]
\centering
\footnotesize
\begin{tabular}{rrrr}
\toprule
$w$ & \scriptsize Mammographic & \scriptsize Carcinogenesis  & \scriptsize Lymphography \\\midrule
 1 & 21.5 & 341.5 & 51.8 \\
 2 & 11.2 & 173 & 30.8 \\
 4 & 6.8 & 113.3 & 21.5 \\
8  & 5.8 & 99.6 & 19.6\\
\bottomrule
\end{tabular}
\caption{Running times of bounded fitting up to size 8 with number of worker threads $w$. Average of 3 runs.}
\label{table:exp-parallel}
\end{table}
\thmoccamconstantoutdegree*
\begin{proof}
    We begin with the second part, that $\mathbf{A}_f$ is an Occam Algorithm if input databases have bounded domains. If there is a bounded number of values for each feature then the number of new concept names that the reduction of from Lemme \ref{lem:reduction} introduces is bounded in the same way. Thus we deal with a finite number of concept names. It then follows from the proof of Theorem 4 of \cite{DBLP:conf/ijcai/CateFJL23} that Algorithm $\mathbf A_f$ is an Occam algorithm.

    For the first part of the statement, let $\mathbf{A}_{f,\ell}$ be Algorithm $A_{f}$ obtained by bounding outdegree in interpretations by $\ell$. We require the following two definitions. For a set of examples $E$ let $\text{reach-val}^f_s(E)$ be the set of all values for $f$ appearing in $E$ at depth at most $s$. $|\text{reach-val}^f_s(E)|\le |E| \ell^s$. Let $\text{val}_f(C)$ be the set of all values for $f$ appearing in $C$.    
    \bigskip
    
    Let $\Sigma$ be any signature, $s,m\ge 1$ and $C_T$ be any target concept with $\lVert C\rVert\le s$. For a set of examples $E$ let $\mathcal H^{\mathbf A_{f,\ell}}_E(\Sigma,s,m)$  be the set of all concepts from $\mathcal H^{\mathbf A_{f,\ell}}(\Sigma,s,m)$ that only use feature values $f\bowtie v$ with $v\in  \text{reach-val}^f_s(E)$. Since any concept $C\in\mathcal H^{\mathbf A_{f,\ell}}(\Sigma,s,m)$ has size at most $\lVert C\rVert\le s$, it follows that 
    \begin{equation}\label{eq:actual_hyp_space_bound}
    |\mathcal H_E^{\mathbf A_{f,\ell}}(\Sigma,s,m)|\le (|\Sigma|+c+|\Sigma|l^sm)^s
    \end{equation}
    for some constant $c$ that accounts for operators. Now, let $d$ be the VC-dimension of $\mathcal H^{\mathbf A_{f,\ell}}(\Sigma,s,m)$ and let $E$ with $|E|=d$ and $\mathcal C\subseteq \mathcal H^{\mathbf A_{f,\ell}}(\Sigma,s,m)$ with $|\mathcal C|=2^d$ w.l.o.g. be a set of examples and a set of concepts, respectively, such that $E$ is shattered by $\mathcal C$. Because of $|\mathcal C|=2^d$ we have that for any $C_1,C_2\in \mathcal C$ with $C_1\ne C_2$: \begin{equation}\label{eq:ext_diff}
        C_1^I\cap E\ne C_2^I\cap E.
    \end{equation}
    
    \textit{Claim:} There is a set $\mathcal C'\subseteq \mathcal H_E^{\mathbf A_{f,\ell}}(\Sigma,s,m)$ with $|\mathcal C'|=2^d$ that shatters $E$.
    
    \bigskip
    
    For $C\in\mathcal C$, let $C'$ be $C$ with all subconcepts $f\bowtie v$ of $C$ with $f\in\Sigma_F,\bowtie\in\{\le,\ge\},v\notin \text{reach-val}^f_s(E)$ replaced by $f\bowtie v'$ where in case of $\bowtie = \le, v'$ is the largest element of $\text{reach-val}^f_s(E)$ with $v'\le v$ and in case of $\bowtie = \ge, v'$ is the smallest element of $\text{reach-val}^f_s(E)$ with $v'\ge v$.
    Define $g:\mathcal C\rightarrow \mathcal H_E^{\mathbf A_{f,\ell}}(\Sigma,s,m)$ by $g(C)= C'$. Then, \begin{enumerate}
        \item $g(\mathcal C)\subseteq\mathcal H_E^{\mathbf A_{f,\ell}}(\Sigma,s,m) $, that is, $g$ is well-defined. This follows immediately from the construction of $C'$ as in $C'$ all values for feature restrictions appear in $E$.
        \item For any $a\in E: a\in C^I\Leftrightarrow a\in C'^I$. This follows from the construction of $C'$.
        \item $\mathcal C' = g(\mathcal C)$ shatters $E$, an immediate consequence of Point 2 and that $\mathcal C$ shatters $E$.
        \item $g$ is injective and consequently $|\mathcal C'| = 2^d$.
    \end{enumerate}    
    To see that Point 4. holds, assume $g$ is not injective, that is, there are $C_1,C_2, D$ with $g(C_1) = g(C_2) = D$. Point 2. then implies $C_1^I\cap E = C_2^I \cap E$, a contradiction to Equation \eqref{eq:ext_diff}.

   From the Claim and Equation \eqref{eq:actual_hyp_space_bound} we obtain 
    \[2^d\le | \mathcal H_E^{\mathbf A_{f,\ell}}(\Sigma,s,m)| \le  (|\Sigma|+c+|\Sigma|l^sm)^s\] and thus
    \begin{align*}
        &d \le \log((|\Sigma|+c+|\Sigma|l^sm)^s)\\
        & = s\log(|\Sigma|+c+|\Sigma|l^sm)\\
        & \le s\log(2|\Sigma|cl^sm)\\
        & = s(\log(2|\Sigma|c)+\log(l^s)+\log(m)))\\
        & \le |\Sigma|c's^2\sqrt m\tag{for an appropriate constant $c'$} 
    \end{align*}
\end{proof}

\section{Additional Experiment Data}
We give the remaining results for generalization on the SML-Benchmarks that were omitted in the main part in Table \ref{tab:generalization_full}. Further results regarding parallelization are given in Table \ref{table:exp-parallel}.

\end{document}